\documentclass{article} %
\usepackage{iclr2027_conference,times}

\usepackage{amsmath,amsfonts,bm}

\def\eqref#1{equation~\ref{#1}}

\def\1{\bm{1}}

\DeclareMathAlphabet{\mathsfit}{\encodingdefault}{\sfdefault}{m}{sl}
\SetMathAlphabet{\mathsfit}{bold}{\encodingdefault}{\sfdefault}{bx}{n}

\usepackage{subcaption}
\usepackage{paralist}
\usepackage{hyperref}
\usepackage{microtype}
\usepackage{url}
\usepackage{booktabs}
\usepackage{tabularx}
\usepackage{graphicx}
\hypersetup{
  colorlinks=true,
  citecolor=blue
}
\usepackage{wrapfig}
\usepackage{lipsum}

\usepackage{amssymb}
\usepackage{algorithm}
\usepackage{algpseudocode}
\usepackage{amsthm}
\usepackage{caption}
\usepackage{enumitem}
\usepackage{subcaption}
\usepackage{nicefrac}

\theoremstyle{plain}
\newtheorem{proposition}{Proposition}[section]
\newtheorem{corollary}[proposition]{Corollary}

\newcommand{\compactop}[3]{%
  #1_{\raisebox{0.15ex}{$\scriptscriptstyle #2$}}%
    ^{\raisebox{-0.15ex}{$\scriptscriptstyle #3$}}%
}
\renewcommand{\paragraph}[1]{\textbf{#1}\ }

\title{Hierarchical Cooperative GFlowNets}

\author{%
Pedro Dall'Antonia$^{1}$\quad
Juan Belieni$^{1}$\quad
Ana Carolina Erthal$^{1}$\quad
Saullo H.\,G.\ de Oliveira$^{1}$\\
\textbf{Eliezer de Souza da Silva}$^{2}$\quad
\textbf{Diego Mesquita}$^{1}$\\[4pt]
$^{1}$School of Applied Mathematics (EMAp), Getulio Vargas Foundation, Rio de Janeiro, Brazil\\
$^{2}$University of Coimbra, CISUC/LASI, DEI}

\newcommand{\ind}[1]{\mathbf{1}\!\left\{#1\right\}}

\iclrfinalcopy
\begin{document}

\maketitle
\lhead{Preprint}

\begin{abstract}
Generative Flow Networks (GFlowNets) are trained to sample objects from compositional spaces with probability proportional to a reward function, but ineffective exploration can cause learned samplers to concentrate on a few high-reward regions. 
An intuitive remedy is to train multiple GFlowNets
independently in parallel and combine their samples. However, this does not prevent multiple models from collapsing onto the same high-reward areas, leaving large regions of the state space unexplored. 
With this in mind, we propose Hierarchical Cooperative GFlowNets (HCG), an approach that dynamically partitions the search space to coordinate discovery without relying on hand-crafted boundaries.
HCG imposes an ordering in which the highest-ranked model may explore the entire space, while lower-ranked models are progressively pushed toward regions not covered by higher-ranked ones. 
We provide a theoretical characterization of how specific learning objectives can shift probability mass away from designated areas, instantiating this mechanism within our architecture.
A shared classifier coordinates the hierarchy by partitioning the search space, keeping the GFlowNets independently parameterized and enabling asynchronous training.
Furthermore, we derive a normalizer-weighted aggregation rule for the ensemble and prove that it recovers the target reward-proportional distribution under the stated assumptions.
Finally, experiments across multimodal distributions, sequence generation, combinatorial optimization, and biological sequence design show that HCG improves mode coverage and target approximation, translating increases in the number of coordinated samplers into broader, complementary exploration.
    \looseness=-5

\end{abstract}

\section{Introduction}
\label{sec:introduction}

Generative Flow Networks (GFlowNets) are a class of generative models trained to learn stochastic policies that sample structured objects with probability proportional to a non-negative reward function \citep{bengio2021flow,bengio2023foundations}. By matching a reward-based distribution %
GFlowNets provide a natural mechanism for generating diverse collections of high-reward objects, with demonstrated applications in challenging combinatorial problems in various scientific domains such as biological sequence design and molecular discovery \citep{jain2022biological,zhang2023flows}, causal discovery \citep{deleu2022bayesian,silvaadele2026}, as well as language-model alignment and post-training \citep{kwon2024gdpo,bartoldson2025, fawkes2026ftrajectory}. 

Naturally, as search spaces grow, effective exploration requires increasingly larger computational budgets, making training multiple parallel GFlowNets an intuitive avenue to translate additional compute into broader coverage. However, naively training multiple GFlowNets targeting the same reward may allocate probability to identical regions, limiting broad complementary coverage. Existing approaches to distribute GFlowNet training exploit useful decompositions when they are available. For instance, \citet{dasilva2024embarrassingly} leverage the log-additive structure of Bayesian posteriors to distribute independent reward factors across multiple GFlowNets, while \citet{silva2025generalization} rely on a user-defined decomposition of the state graph to define subproblems for the individual GFlowNets. \looseness=-1

Exploration can also be improved by pairing a main GFlowNet with an auxiliary exploratory one, whose trajectories are used to improve the main sampler's coverage. Sibling-augmented GFlowNets~\citep{madan2025sibling} add a novelty bonus to the explorer's reward, pushing it toward rarely visited states, while Adaptive Complementary Exploration~\citep{dallantonia2026avoid} leverages a balance condition that steers the auxiliary sampler toward high-reward regions the main GFlowNet undersamples. These methods, however, do not provide mechanisms to utilize many parallel samplers. \looseness=-1

The idea of using multiple GFlowNets to broaden exploration was introduced by Boosted GFlowNets~\citep{dallantonia2026boostedgflownetsimprovingexploration}, which build an ensemble in stages: each new GFlowNet is trained on the reward mass not yet captured by the frozen members trained before it, and the ensemble is sampled by selecting each member with probability proportional to its normalizing constant. This construction, however, is inherently sequential, since a member can only be trained after all of its predecessors have been trained and frozen, so the training time grows with the number of members. Our goal is instead to train all components simultaneously. \looseness=-1

We argue that coordinating many samplers requires imposing structure among them. To maximize space coverage, our approach---\emph{Hierarchical Cooperative GFlowNets} (HCG)---organizes the GFlowNets into a hierarchy in which regions covered by higher-ranked models become unavailable to lower-ranked ones, progressively pushing the latter toward unexplored areas. For this coordination to scale, coverage information is centralized in a shared coordinator rather than exchanged pairwise, whose cost would grow quadratically with the ensemble size. The coordinator consists of a classifier that, in essence, implements a weighted ratio estimator for the probabilities the parallel GFlowNets assign to each terminal state. 

We bake this classifier into the individual GFlowNets' losses to mask regions well covered by higher-hierarchy GFlowNets while keeping the target reward unchanged elsewhere. The classifier is trained with samples weighted by two factors: each model's normalizing constant, so that its assignments reflect coverage rather than sampling concentration; and monotonic priority weights that respect the hierarchy, so that overlaps are resolved in favor of the higher-ranked model.

Our theoretical analysis provides a provably correct recipe to combine HCG's individual models into a linear mixture of GFlowNets, as well as a characterization of each model's behavior both inside and outside their assigned regions.
Empirically, we show that HCG provides better mode discovery than methods that explicitly incentivize exploration, particularly in settings where high-reward modes are sparse. Furthermore, we find that increasing the number of components leads to the discovery of additional modes and faster convergence toward the target distribution.

\noindent\textbf{Contributions.} In summary, our main contributions are:

\begin{enumerate}[itemsep=0pt,topsep=0pt,left=8pt]
    \item We introduce hierarchical ownership as a dynamic mechanism for coordinating exploratory coverage among multiple GFlowNets without a predefined partition of the search space;
    \item We characterize a learning objective that avoids already covered regions while preserving the target elsewhere and instantiate it within TB. 
    We further derive a normalizer-weighted aggregation rule that recovers the original reward-proportional target;
    \item We develop an amortized ownership mechanism that removes direct cross-model dependencies from local GFlowNet updates and supports distributed training;
    \item We empirically evaluate HCG across synthetic, combinatorial, and biological sequence design tasks, showing improved exploration and mode discovery over existing exploration methods, and that increasing the number of components leads to broader coverage.
\end{enumerate}

\section{Background}
\label{sec:background}

\paragraph{Notation.}
Let $G = (\mathcal{S}, A)$ be a directed acyclic graph (DAG), where $\mathcal{S}$ is the set of vertices (states) and $A \subseteq \mathcal{S} \times \mathcal{S}$ is the set of edges (transitions, or actions) between pairs of vertices.
Let $s_0 \in \mathcal{S}$ represent the initial state, which has no incoming edges, and $\mathcal{X} \subseteq \mathcal{S}$ be the set of terminal states.
A trajectory $\tau$ is any sequence $(s_0, \cdots, s_T)$ with $s_T \in \mathcal{X}$ and $(s_t,s_{t+1}) \in A$ for all $t$.
Notice that $s_T$ is also usually denoted by $x$ when it represents a terminal state.

A forward policy $P_F : \mathcal{S} \times \mathcal{S} \to [0, 1]$ defines a probability distribution over the children of each state $s \in \mathcal{S}$.
Similarly, a backward policy $P_B : \mathcal{S} \times \mathcal{S} \to [0, 1]$ defines a probability distribution over the parents of each state, equivalently defining a policy on the transposed graph obtained by reversing the edges of $G$.
The forward and backward policies induce probabilities over complete trajectories,
\[
P_F(\tau) = \compactop{\prod}{t=0}{T-1}P_F(s_{t+1}\mid s_t),
\qquad
P_B(\tau\mid x) = \compactop{\prod}{t=0}{T-1}P_B(s_t\mid s_{t+1}).
\]

\paragraph{Generative Flow Networks.}
Given a positive reward function $R:\mathcal X\rightarrow\mathbb R_{>0}$, the goal of a GFlowNet is to learn an amortized trajectory sampler over $G$ such that, for every $x \in \mathcal{X}$,
\begin{equation}
    P_\top(x) := { \sum_{\tau:\tau\to x}P_F(\tau)
    \propto R(x)}.
\end{equation}
Directly enforcing this terminal marginal requires summing over all trajectories terminating at $x$ and is generally intractable. Instead, GFlowNets are trained by enforcing balance conditions whose satisfaction implies the desired reward-proportional terminal distribution. In this work, we focus on the trajectory balance (TB) condition~\citep{malkin2022trajectory}, which introduces a backward policy $P_B$ and a normalizing constant $Z$ and requires, for every complete trajectory $\tau$ terminating at $x$,
\begin{equation}
    ZP_F(\tau)=R(x)P_B(\tau\mid x).
    \label{eq:tb-condition}
\end{equation}
Summing this identity over all trajectories terminating at $x$, and using $\sum_{\tau:\tau\to x}P_B(\tau\mid x)=1$, yields $ZP_\top(x)=R(x)$, recovering the desired reward-proportional terminal distribution.
The triplet $(P_F,P_B,Z)$ can be learned by minimizing the logarithmic residual of the TB condition. For a complete trajectory $\tau$ terminating at $x$, the trajectory balance loss is
\begin{equation}
    \mathcal L_{\mathrm{TB}}(\tau)
    =
    \left[
        \log
        \frac{
            ZP_F(\tau)
        }{
            R(x)P_B(\tau\mid x)
        }
    \right]^2.
    \label{eq:tb-objective}
\end{equation}
Thus, $\mathcal L_{\mathrm{TB}}(\tau)=0$ for every complete trajectory $\tau$ implies the TB condition in Eq.~\ref{eq:tb-condition}, and consequently the desired terminal distribution.

\section{Hierarchical Cooperative GFlowNets}
\label{sec:method}

In this section, we introduce HCG as a framework that coordinates multiple GFlowNets by dividing exploration hierarchically according to each model's terminal support. We assume for now that these supports are available through an oracle, deferring its practical instantiation as a classifier to Section~\ref{sec:amortized-ownership}, and show that the resulting specialized models can be combined to recover the global target.\looseness=-1

\subsection{Hierarchical division of exploration}

\begin{wrapfigure}{r}{0.55\textwidth}
    \centering
    \vspace{-1.5em}
    \includegraphics[width=\linewidth]{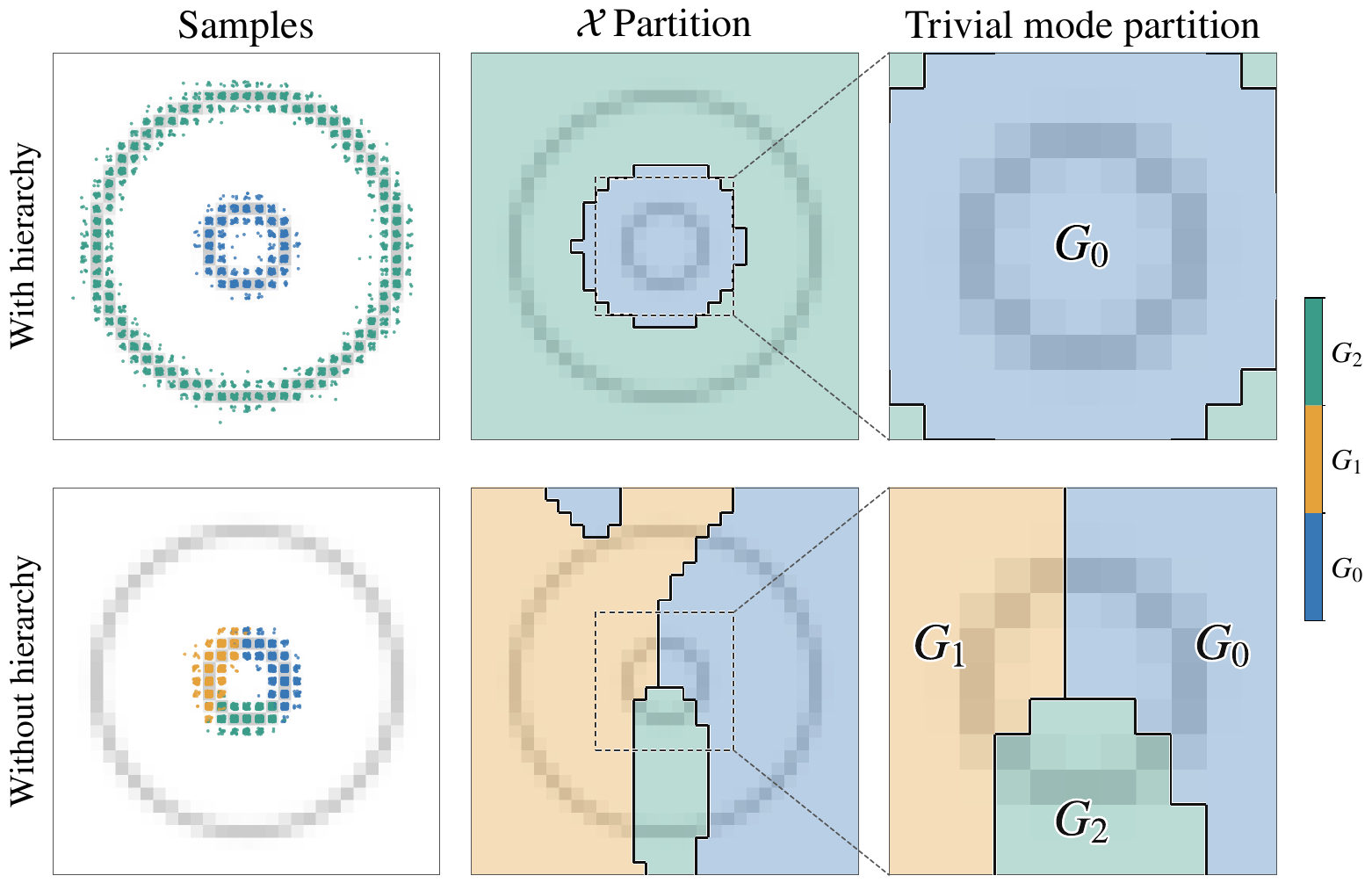}
    \caption{Three GFlowNets on a two-ring reward with (top) and without (bottom) an exploration hierarchy. \textbf{Left:} samples, colored by generating model. \textbf{Center:} regions of $\mathcal{X}$ owned by each model. \textbf{Right:} zoom on the inner mode. Without hierarchical coordination, all models split the inner mode and the outer ring is never sampled. With hierarchy, $G_0$ owns the inner mode and a lower-ranked model is redirected to the outer ring.}
    \label{fig:hierarchy-adv}
    \vspace{-2em}
\end{wrapfigure}

\label{sec:ownership}

An intuitive approach for orchestrating multiple GFlowNets is to partition the search space into different regions to be explored by different GFlowNets.
However, a naïve implementation of this approach may not guarantee broader exploration coverage, given that we could end up with different models \emph{competing} for the same region while leaving other regions unexplored. 

Figure~\ref{fig:hierarchy-adv} illustrates this failure mode. Without a hierarchy, the models may split the inner mode among themselves and leave the outer ring undiscovered. We therefore introduce a hierarchy in which regions covered by higher-ranked models are unavailable to lower-ranked ones. As the figure shows, the lower-ranked models are then redirected toward complementary regions, and the ensemble recovers both modes.\looseness=-1

Consider $K+1$ independently parameterized GFlowNets $\mathcal{G}=\{G_0,\ldots,G_K\}$, all sharing the same reward, and impose the hierarchy $G_0\succ G_1\succ\cdots\succ G_K$, where lower indices indicate higher priority.
The hierarchy resolves overlaps in coverage by assigning each terminal object to the highest-ranked model that covers it.
Letting $S_i$ denote the support of the terminal distribution of the forward policy of $G_i$, the ownership of each terminal is determined as follows: \looseness=-1
\begin{equation}
    o(x)=\min\{i:x\in S_i\},
    \qquad
    O_i
    = 
    S_i\setminus{\textstyle\bigcup_{j<i}} S_j.
    \label{eq:ownership}
\end{equation}
This definition guarantees that the ownership regions $\{O_i\}_{i=0}^K$
form
a partition of the overall support $\bigcup_i S_i$.
We also define the admissible region of $G_i$ as the complement of the union
of all higher-ranked ownership regions, \looseness=-1
\begin{equation}
    {\textstyle
    A_i
    =
    \mathcal{X}\setminus\bigcup_{j<i}O_j,
    \qquad
    D_i
    =
    \mathcal{X}\setminus A_i
    =
    \bigcup_{j<i}O_j.}
    \label{eq:admissible-disallowed}
\end{equation}
Thus, $A_i$ is the allowed region, whereas $D_i$ represents the disallowed region for a GFlowNet $G_i$. These regions are not fixed, and as each support $S_i$ evolves during training, so do $O_i$, $A_i$, and $D_i$, which we leave implicit in the notation for clarity.
As admissibility is distinct from the support, $A_i$ may contain terminals that $G_i$ has not yet discovered. 
In particular, for the highest-ranked model, $A_0=\mathcal{X}$ and $D_0=\emptyset$, so $G_0$ is allowed to explore the entire space.
Ideally, if the model discovers a new mode, that region immediately becomes part of $O_0$ and therefore is no longer admissible for any other GFlowNet.
This mechanism ensures models do not compete over the same modes but rather redirect their exploration toward undiscovered regions. \looseness=-1

Under the hierarchical allocation, each model with a nonempty ownership
region is assigned the reward-proportional distribution restricted to that region,\looseness=-1
\begin{equation}
    P^{i}_{\top}(x)
    =
    (R(x)/Z_i) \ind{x\in O_i},
    \quad
    Z_i
    =
    \textstyle\sum_{x\in O_i}R(x).
    \label{eq:component-target}
\end{equation}

\subsection{Combining specialized GFlowNets}
\label{sec:mixture}

Since each GFlowNet concentrates its probability mass in its ownership region, no single model samples from the global target. The global target must therefore be recovered by an ensemble of specialized models. We combine them by first selecting $G_i$ with probability $\omega_i = Z_i / \sum_{j=0}^{K}Z_j$ and then sampling from its forward policy. The resulting terminal distribution is \begin{equation}
P_{\mathrm{ens}}(x)=\textstyle\sum_{i=0}^{K}\omega_i P_F^{i,\top}(x).
\label{eq:ensemble}
\end{equation}
Models with $Z_i = 0$ receive zero weight and are excluded from the ensemble. The normalizers account for the different amounts of reward mass represented by the models. This is the same normalizer-weighted mixture used to sample from Boosted GFlowNets~\citep{dallantonia2026boostedgflownetsimprovingexploration}, where it combines sequentially trained members; here, we apply it to components trained in parallel under the hierarchy. 

\begin{proposition}[Target recovery]
    \label{prop:target-recovery}
    Suppose the ownership regions $O_0,\ldots,O_K$ are disjoint and cover $\mathcal{X}$ and each model with $Z_i > 0$ satisfies Eq.~\ref{eq:component-target}. Then the normalizer-weighted ensemble satisfies $P_{\mathrm{ens}}(x)=R(x)/Z$ for every $x\in\mathcal{X}$.
\end{proposition}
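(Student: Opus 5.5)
The argument is a direct computation. The key facts are that the normalizers $Z_i$ add up to the global partition function $Z=\sum_{x\in\mathcal X}R(x)$, and that each $x$ lies in exactly one ownership region. First I will fix $Z:=\sum_{x\in\mathcal{X}}R(x)$, which is the quantity implicit in the statement. Since $O_0,\ldots,O_K$ are disjoint and cover $\mathcal{X}$, splitting this sum according to the partition gives
\[
\textstyle\sum_{j=0}^{K} Z_j=\sum_{j=0}^{K}\sum_{x\in O_j}R(x)=\sum_{x\in\mathcal{X}}R(x)=Z .
\]
It follows that $\omega_i=Z_i/Z$ for every $i$. Because $R>0$, we also have $Z>0$, so the weights are well defined.

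Next I will fix $x\in\mathcal{X}$ and let $i^\ast=o(x)$ be the unique index with $x\in O_{i^\ast}$, which exists by the partition assumption. Positivity of $R$ then gives $Z_{i^\ast}\ge R(x)>0$. Hence $G_{i^\ast}$ is one of the models that satisfies Eq.~\ref{eq:component-target}, and therefore $P_F^{i^\ast,\top}(x)=R(x)/Z_{i^\ast}$.

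I will then expand Eq.~\ref{eq:ensemble} term by term. For $j\neq i^\ast$ there are two cases. If $Z_j=0$, the term vanishes because $\omega_j=0$, whatever $G_j$ samples. If $Z_j>0$, Eq.~\ref{eq:component-target} gives $P_F^{j,\top}(x)=0$, since $x\notin O_j$ by disjointness. This leaves only the $i^\ast$ term:
\[
P_{\mathrm{ens}}(x)=\omega_{i^\ast}\,\frac{R(x)}{Z_{i^\ast}}=\frac{Z_{i^\ast}}{Z}\cdot\frac{R(x)}{Z_{i^\ast}}=\frac{R(x)}{Z}.
\]

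The only delicate point is the bookkeeping for models with $Z_i=0$, whose forward policies are unconstrained. I would handle this explicitly by the convention stated before the proposition, that such models receive weight zero and are excluded. I would also note that the owner of any $x$ automatically has $Z_{i^\ast}>0$ because the reward is positive. With these two observations in place, the remaining steps are immediate.
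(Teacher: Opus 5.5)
Your proposal is correct and follows the paper's proof: both use the partition assumption to get $\sum_j Z_j = Z$, then reduce the mixture to the single term of the owner of $x$. Your check that the owner has $Z_{o(x)} \ge R(x) > 0$ makes explicit a step the paper leaves implicit when it asserts $\sum_{i:Z_i>0}\ind{x\in O_i}=1$.
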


\emph{Proof.}
Under Eq.~\ref{eq:component-target}, disjointness and full coverage imply $\sum_i Z_i=Z$. Therefore,
\begin{equation}
    P_{\mathrm{ens}}(x)
    =
    \sum_{i:Z_i>0}
    \frac{Z_i}{\sum_j Z_j}
    \frac{R(x)\ind{x\in O_i}}{Z_i}
    =
    \frac{R(x)}{\sum_j Z_j}
    \sum_{i:Z_i>0}\ind{x\in O_i}
    =
    \frac{R(x)}{Z}.
    \label{eq:ensemble-target}
\end{equation}
The last identity follows because each terminal belongs to exactly one ownership region. 

The ensemble therefore preserves the target under local reward calibration and complete coverage. This result concerns the recombination of calibrated specialized models, rather than guaranteeing that the training dynamics achieve complete coverage or exact calibration.
Uniform model weights would not generally do so when the ownership regions contain different amounts of reward mass. 

If $G_0$ eventually covers all of $\mathcal{X}$, then $O_0=\mathcal{X}$ and the lower-ranked ownership regions are empty, resulting in asymptotic degeneracy. The normalizers and mixture weights therefore vanish, and the ensemble reduces to $G_0$ without changing the target.
Our objective is therefore complementary exploration during finite-time training, rather than persistent specialization or a nonzero mixture weight for every model.

\section{Training Hierarchical Cooperative GFlowNets}
\label{sec:coordination}
Realizing the hierarchical allocation of Section~\ref{sec:ownership} requires:
\begin{inparaenum}[i)]
    \item training each GFlowNet to move probability away from its disallowed region while preserving reward-proportional learning elsewhere, and
    \item inferring these regions dynamically during training, replacing the oracle assumed so far.
\end{inparaenum}
We address both in the following sections and combine them into a coordinated training procedure.

In practice, we interpret coverage through the empirical support induced by finite sampling. While Section 3 defines $S_i$ using the support of the terminal distribution, during training we only observe the subset of terminals visited by each GFlowNet within a finite sampling budget. This distinction is particularly relevant to exploration as even when a policy has full mathematical support, regions assigned negligible probability may remain effectively undiscovered throughout training.

\subsection{Learning Objectives under Disallowed Regions}
\label{sec:repelling}

Given that standard GFlowNet balance objectives require strictly positive rewards, a GFlowNet cannot move probability away from its disallowed region simply by assigning it zero target mass, as the objective would become ill-defined.
We therefore seek a loss function that preserves the original reward on admissible regions while driving probability away from regions that should no longer be explored by a given model.
Let $r(x)>0$ be an auxiliary reward function on an arbitrary GflowNet's disallowed region $D$. For a complete trajectory $\tau$ ending in $x$, we define the loss function
\begin{equation}
    \mathcal{L}(\tau; D, r)
    =
    \left[
        \log\left(
            \frac{
                ZP_F(\tau)
            }{
                \left[
                    R(x)\ind{x\notin D}
                    +
                    r(x)\ind{x\in D}
                \right]
                P_B(\tau\mid x)
            }
            +
            \ind{x\in D}
        \right)
    \right]^2.
    \label{eq:repelling-objective}
\end{equation}

For $x\notin D$, Eq.~\ref{eq:repelling-objective} reduces exactly to Trajectory Balance. For $x\in D$, the additional positive term inside the logarithm prevents balance from being achieved while the forward policy assigns nonzero probability to trajectories ending at $x$. This term enforces exclusion, while $r$ shapes the model's behavior on $D$ as long as mass remains there. Since disallowed regions are by construction covered by higher-ranked models, a lower-ranked model trained on $R$ would remain attracted to their already discovered modes. Replacing $R$ by $r$ on $D$ decouples it from the original reward landscape.

Proposition~\ref{prop:restricted-target} shows that, as long as part of the space remains allowed, at the optimum this construction removes $D$ from the terminal support and recovers the reward-proportional distribution on its complement, regardless of the choice of $r$.

\begin{proposition}[Restricted target]
    \label{prop:restricted-target}
    Suppose $D\subsetneq\mathcal{X}$ and the objective is enforced over all complete
    trajectories. Its global minimum is zero, with
    \begin{equation}
        P_\top^\star(x)
        \propto
        R(x)\ind{x\notin D},
        \qquad
        Z^\star
        =
        {\textstyle\sum_{x\notin D}R(x)}.
        \label{eq:restricted-target}
    \end{equation}
\end{proposition}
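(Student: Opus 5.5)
Since the loss is a square, it is nonnegative, so the plan is twofold. First, exhibit a triplet $(P_F,P_B,Z)$ that makes it vanish on every complete trajectory, which shows the global minimum is zero. Second, show that any triplet attaining zero must induce the stated terminal marginal and normalizer.

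\emph{Step 1 (zero loss forces exclusion of $D$).} Fix a complete trajectory $\tau\to x$ with $x\in D$. Zero loss means $\log\bigl(ZP_F(\tau)/(r(x)P_B(\tau\mid x))+1\bigr)=0$, that is, $ZP_F(\tau)/(r(x)P_B(\tau\mid x))=0$. Since $r(x)>0$ and $P_B(\tau\mid x)$ is a finite probability, this forces $ZP_F(\tau)=0$. Because some allowed terminal must be reachable (Step 2 will show $Z>0$), we get $P_F(\tau)=0$. Summing over $\tau\to x$ gives $P_\top(x)=0$ for every $x\in D$. We read the loss as finite only where the ratio is defined; this excludes trajectories with $P_B(\tau\mid x)=0$ and $P_F(\tau)>0$, as in standard TB.

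\emph{Step 2 (TB on the complement).} For $x\notin D$ the loss is exactly $\mathcal L_{\mathrm{TB}}$ in Eq.~\ref{eq:tb-objective}, so zero loss gives $ZP_F(\tau)=R(x)P_B(\tau\mid x)$ for all $\tau\to x$. Summing over such $\tau$ and using $\sum_\tau P_B(\tau\mid x)=1$ yields $ZP_\top(x)=R(x)$. Since $D\subsetneq\mathcal X$, some $x\notin D$ exists with $R(x)>0$, hence $Z>0$. Summing $ZP_\top(x)=R(x)$ over $x\notin D$ and using Step 1 together with $\sum_x P_\top(x)=1$ gives $Z^\star=\sum_{x\notin D}R(x)$, and then $P^\star_\top(x)=R(x)\ind{x\notin D}/Z^\star$.

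\emph{Step 3 (existence).} Construct a minimizer explicitly. Use the standard flow construction for the restricted reward $\tilde R(x)=R(x)\ind{x\notin D}$. Choose any $P_B$ with full support on parents, define the state flows $F(s)=\sum_{x}\tilde R(x)P_B(s\mid x)$, where $P_B(s\mid x)$ is the backward probability of passing through $s$, and set $P_F(s'\mid s)=F(s')P_B(s\mid s')/F(s)$ whenever $F(s)>0$. Then $ZP_F(\tau)=\tilde R(x)P_B(\tau\mid x)$ along every trajectory, with $Z=F(s_0)=\sum_{x\notin D}R(x)>0$. On $D$ this gives $P_F(\tau)=0$ with a finite, positive denominator $r(x)P_B(\tau\mid x)$, so the loss there is $[\log(0+1)]^2=0$.

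\emph{Main obstacle.} The delicate step is Step 3: $P_F$ must be a genuine distribution over children at every state, including states with $F(s)=0$ that lead only into $D$. Such states are unreachable under $P_F$, so an arbitrary choice of children distribution there still leaves $P_F(\tau)=0$ on all trajectories through them, and the argument goes through.
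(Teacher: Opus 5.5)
Your proposal is correct and follows the paper's proof essentially step for step. The shared steps are nonnegativity of the loss, zero loss on $D$ forcing $P_F(\tau)=0$, Trajectory Balance on the complement summed over trajectories to get $ZP_\top(x)=R(x)$, normalization to pin down $Z^\star=\sum_{x\notin D}R(x)$, and existence via the standard construction for the restricted reward.

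Your version is somewhat more careful than the paper's in two places. You derive $Z>0$ from $D\subsetneq\mathcal X$ instead of taking it for granted. You also spell out the flow construction for existence, including a full-support $P_B$ and the handling of states with $F(s)=0$, where the paper only invokes the standard TB construction.
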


\begin{corollary} 
If $D=\mathcal{X}$ and $Z$ is learned jointly with the policies, no zero-loss solution exists for $Z>0$, and the objective's infimum is approached as $Z\rightarrow0$.
\end{corollary}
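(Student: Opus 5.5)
The plan is to analyze the loss in Eq.~\ref{eq:repelling-objective} when $D=\mathcal{X}$, so that every terminal $x$ lies in $D$. Then every complete trajectory $\tau\to x$ contributes
\[
\mathcal{L}(\tau;\mathcal{X},r)=\left[\log\left(\frac{ZP_F(\tau)}{r(x)P_B(\tau\mid x)}+1\right)\right]^2 .
\]
The argument has two parts. First, I will show that zero loss is impossible for any $Z>0$. Second, I will show that the loss can be driven to zero by sending $Z\to 0$, so the infimum is $0$ but it is not attained in the admissible set $Z>0$.

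\emph{No zero-loss solution.} Zero loss on $\tau$ requires the argument of the logarithm to equal $1$, that is, $ZP_F(\tau)=0$. Because $Z>0$, this forces $P_F(\tau)=0$ for every complete trajectory. This contradicts the fact that $P_F$ is a probability distribution over the children of each state. Starting at $s_0$ and repeatedly following $P_F$ in the finite DAG produces at least one complete trajectory with positive probability, since all trajectories terminate in $\mathcal{X}$. The resulting terminal marginal $P_\top$ sums to $1$, so some $x$ has $P_\top(x)>0$, and therefore some $\tau\to x$ has $P_F(\tau)>0$. That trajectory has strictly positive loss. This is the same mechanism as in Proposition~\ref{prop:restricted-target}, where exclusion is only possible because some allowed mass remains. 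Here that hypothesis, $D\subsetneq\mathcal{X}$, fails.

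\emph{Infimum approached as $Z\to 0$.} Fix any policies $(P_F,P_B)$. For each $\tau$, the quantity $ZP_F(\tau)/(r(x)P_B(\tau\mid x))$ is nonnegative and increasing in $Z$. Hence each per-trajectory loss is a nonnegative function that is monotonically increasing in $Z$ and tends to $\log(1)^2=0$ as $Z\to 0^+$.

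There is one subtlety to handle: trajectories with $P_B(\tau\mid x)=0$ but $P_F(\tau)>0$ would give an infinite ratio. I will avoid this by restricting to policies with full support, which is the standard parameterization, or by picking $P_B$ positive on every trajectory. The number of trajectories is finite in a finite DAG, or the loss is taken in expectation under $P_F$. In either case the total objective is a finite sum, or a dominated expectation, of terms that each go to $0$. So the objective tends to $0$ as $Z\to 0$.

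Combined with the first part, $0$ is the infimum but it is not attained for any $Z>0$. The gradient in $\log Z$ is always positive, which pushes $Z$ downward.

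I expect the main obstacle to be making the limit argument rigorous when the trajectory space is infinite or when the loss is an expectation under a training distribution. The plan there is to bound each term by $[\log(1+Z\,c)]^2\le Z^2c^2$ with $c=\sup_\tau P_F(\tau)/(r(x)P_B(\tau\mid x))$, assuming this supremum is finite under full-support policies and a reward $r$ bounded away from zero. That bound gives uniform convergence.
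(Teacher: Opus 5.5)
Your proposal is correct and follows essentially the same route as the paper's proof: for $Z>0$, normalization of $P_F$ forces some complete trajectory to have positive forward probability, which makes the argument of the logarithm exceed one; for fixed policies, each per-trajectory term vanishes as $Z\to0$. Your extra handling of trajectories with $P_B(\tau\mid x)=0$, and of the infinite or expectation case via $\log(1+y)\le y$, goes slightly beyond the paper, which leaves these points implicit.
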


Note that when a GFlowNet places all its mass in $D$, $r$ shapes how the model behaves within this region. When the entire space is disallowed and $Z$ is fixed, Proposition~\ref{prop:auxiliary-reward-bias} shows that the terminal distribution becomes proportional to $r$, acting as the model's preference among disallowed terminals.\looseness-1

\begin{proposition}[Auxiliary reward bias]
    \label{prop:auxiliary-reward-bias}
    Suppose $D=\mathcal{X}$ and fix $Z>0$. Then the on-policy expected objective
    $\mathbb{E}_{\tau\sim P_F}\!\left[\mathcal{L}(\tau;\mathcal{X},r)\right]$
    is globally minimized when $P_\top^\star(x)
        \propto
        r(x).$
\end{proposition}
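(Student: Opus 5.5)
The plan is to reduce the on-policy expected objective to a strictly convex functional of the trajectory distribution and read off its minimizer from the KKT conditions. With $D=\mathcal{X}$, every trajectory $\tau\to x$ falls in the disallowed branch of Eq.~\ref{eq:repelling-objective}, so
\[
\mathcal{L}(\tau;\mathcal{X},r)=\Bigl[\log\bigl(1+q(\tau)/w(\tau)\bigr)\Bigr]^2,
\qquad q(\tau):=P_F(\tau),\quad w(\tau):=\frac{r(x)P_B(\tau\mid x)}{Z}.
\]
The on-policy objective is therefore $F(q)=\sum_\tau q(\tau)\,\bigl[\log(1+q(\tau)/w(\tau))\bigr]^2$. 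Let $W:=\sum_\tau w(\tau)=\sum_x r(x)/Z$, using $\sum_{\tau\to x}P_B(\tau\mid x)=1$.

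First, fix $P_B$ and minimize $F$ over distributions $q$ on complete trajectories. Write each summand as $w\,h(q/w)$ with $h(t)=t\log(1+t)^2$. Its derivative is
\[
g(t)=h'(t)=\log(1+t)^2+\frac{2t\log(1+t)}{1+t}.
\]
The function $g$ is strictly increasing on $[0,\infty)$ with $g(0)=0$. Hence $h$ is strictly convex, $F$ is strictly convex on the simplex, and the KKT conditions are sufficient and characterize the unique minimizer.

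The Lagrange condition reads $g(q(\tau)/w(\tau))=\lambda$ on the support of $q$ and $g(0)\ge\lambda$ off it. Some $q(\tau)>0$ must hold, which forces $\lambda>0$. Since $g(0)=0<\lambda$, no trajectory can have $q(\tau)=0$, so $q/w$ equals the same constant on every trajectory. Normalization then gives $q=w/W$. Marginalizing over $\tau\to x$ yields
\[
P_\top^\star(x)=\frac{r(x)}{\sum_{x'}r(x')}\propto r(x).
\]

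Second, handle $P_B$ in case it is learned jointly. At the optimum above, $q/w\equiv 1/W$, so the minimal value is $\bigl[\log(1+Z/\sum_x r(x))\bigr]^2$. This value does not depend on $P_B$. Hence the global minimum over $(P_F,P_B)$ equals this constant. Moreover, every minimizer has $P_F(\tau)\propto r(x)P_B(\tau\mid x)$ for the accompanying $P_B$, and thus has terminal marginal proportional to $r$.

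The step I expect to need the most care is realizability. The optimization above ranges over arbitrary trajectory distributions, whereas $P_F$ is a Markovian forward policy on the DAG. To close this gap I would use the standard flow argument. Define the flow $F(s)$ by propagating $r(x)P_B$ backward from the terminals, and set $P_F(s'\mid s)=F(s')P_B(s\mid s')/F(s)$. This Markovian policy induces exactly $q\propto r(x)P_B(\tau\mid x)$, so the unconstrained minimizer is attained within the policy class. It is therefore also the minimizer over policies.
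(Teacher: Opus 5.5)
Your proof is correct. It rests on the same key fact as the paper's proof, the strict convexity of $u\mapsto u\log^2(1+cu)$, but organizes the optimization differently.

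The paper splits along the GFlowNet structure. Writing $P_F(\tau)=P_\top(x)P_F(\tau\mid x)$, it first applies Jensen within each terminal, with weights $P_B(\tau\mid x)$, to show that $P_F(\tau\mid x)=P_B(\tau\mid x)$ is optimal for fixed $P_\top(x)$. It then applies Jensen again across terminals, with weights proportional to $r(x)$, to force $P_\top(x)/r(x)$ to be constant.

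You instead treat the full trajectory distribution as a single variable and solve one strictly convex program on the simplex via KKT. The off-support condition $g(0)\ge\lambda$ rules out boundary minimizers. A single Jensen step, $\sum_\tau w\,h(q/w)\ge W\,h(1/W)$ with weights $w/W$, would reach the same conclusion more cheaply.

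Your route gives two things the paper leaves implicit. First, the explicit minimal value $\log^2(1+Z/\sum_x r(x))$ is independent of $P_B$, so the conclusion survives when $P_B$ is learned jointly. Second, you check that the minimizer is realizable by a Markovian forward policy via the flow construction. The paper's argument also needs this, since its lower bound must be attained for the equality conditions to identify the minimizer, but it does not state it.

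The paper's two-stage route, in turn, shows separately the path-level matching $P_F(\cdot\mid x)=P_B(\cdot\mid x)$ and the terminal-level proportionality to $r$.

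One minor caveat applies to both proofs: trajectories with $P_B(\tau\mid x)=0$ have $w(\tau)=0$ and infinite loss whenever $q(\tau)>0$, so the analysis should be read on the support of $w$.
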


We therefore want to move the model away from disallowed regions, thus avoiding modes already found by higher-ranked models. With that in mind, we set $r(x)=1$, noting that any constant induces the same uniform distribution over $D$. 

When training multiple models, each GFlowNet $G_i$ given by the triplet $(P_F^i, P_B^i, Z_i)$ iteratively minimizes $\mathcal{L}(\tau; D_i, r)$ (Eq. \ref{eq:repelling-objective}). Since $D_0=\emptyset$, the highest-ranking model's loss reduces to standard Trajectory Balance over the entire terminal space, while for every $i>0$, the objective preserves the original reward-proportional target in its allowed region $A_i$, and the model is driven out of $D_i$.

\subsection{Mass-driven Hierarchical Ownership Inference}
\label{sec:amortized-ownership}

A GFlowNet $G_i$ owns a terminal $x$ if it is the highest-ranked component whose support contains $x$ (Section~\ref{sec:ownership}). 
Determining ownership thus requires checking whether any $G_j$, with $j<i$, assigns nonzero probability to that terminal. Although straightforward, evaluating these probabilities through each model's forward and backward policies becomes costly as the ensemble grows.
With $K+1$ GFlowNets generating $N$ terminals per training iteration, this approach requires $O(NK^2)$ cross-model queries, each estimated by Monte Carlo sampling from the backward policy, and places communication and cross-model evaluations directly on each model's training path.

Instead, we centralize coverage information in a shared coordinator. Each model communicates only with this module, which aggregates information from the ensemble and determines the owner of each sampled
terminal object. We implement the coordinator as a classifier $C_\phi(i\mid x)$, trained from generated samples to predict hierarchical ownership. \looseness=-1 

Unweighted, such a classifier would assign each terminal to whichever component samples it most often, penalizing models that spread their mass over broad regions. Scaling samples from $G_i$ by $Z_i$ compensates for differences in sampling concentration. To reflect the prescribed hierarchy when multiple models cover the same terminal, we also multiply by a priority $a_i > 0$, so each sample from $G_i$ contributes $Z_i a_i$. The ratio $a_i/a_{i+1}$ controls how strongly
the classifier favors one level of the hierarchy over the next when their coverage
overlaps. The loss of the classifier is therefore defined as \looseness=-1

\begin{equation}
    \mathcal{L}_C(\phi)
    =
    -
    {\textstyle \sum_{i=0}^{K}}
    Z_i a_i
    \,
    \mathbb{E}_{X\sim P_\top^i}
    \left[
        \log C_\phi(i\mid X)
    \right].
    \label{eq:classifier-loss}
\end{equation}

Since $Z_i$ is also a learnable parameter, the correction weight is approximate early in training.
When the classifier has not yet resolved an overlap in favor of the higher-ranked model, both GFlowNets continue to be trained with standard Trajectory Balance in that region. As their TB losses decrease, $Z_iP_\top^i(x)$ becomes increasingly aligned with $R(x)$, allowing the priority weights to resolve ownership and activate
repulsion in the lower-ranked model. \looseness=-1

\begin{proposition}[Ownership recovery]
    \label{prop:ownership-recovery}
    Let $x$ be a terminal state covered by at least one GFlowNet. Suppose that every
    GFlowNet whose support contains $x$ has zero Trajectory Balance loss on all
    trajectories terminating at $x$. If $a_0>a_1>\cdots>a_K>0$, then the
    optimum $C^\star$
    of Eq.~\ref{eq:classifier-loss} satisfies \looseness=-1
    \begin{equation}
        \textstyle\arg\max_i C^\star(i\mid x)
        =
        o(x).
        \label{eq:classifier-recovers-ownership}
    \end{equation}
\end{proposition}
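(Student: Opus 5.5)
The plan is to compute the pointwise optimum of the weighted cross-entropy in Eq.~\ref{eq:classifier-loss}, and then use the zero-TB-loss hypothesis to simplify the resulting weights.

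\textbf{Step 1: pointwise optimum.} Write the expectations as sums over terminals:
\[
\mathcal{L}_C(\phi)=-\sum_{x\in\mathcal{X}}\sum_{i=0}^K Z_i a_i P_\top^i(x)\log C_\phi(i\mid x).
\]
Treat $C(\cdot\mid x)$ as a free distribution for each $x$, which is the usual idealization of an expressive classifier. The loss then decouples across $x$. For a fixed $x$ with $w_i(x):=Z_ia_iP_\top^i(x)$, not all zero because $x$ is covered, the problem is to minimize $-\sum_i w_i\log C(i\mid x)$ over the simplex. By Gibbs' inequality, or a Lagrange multiplier argument, the minimizer is
\[
C^\star(i\mid x)=\frac{w_i(x)}{\sum_j w_j(x)}.
\]
Hence $\arg\max_i C^\star(i\mid x)=\arg\max_i Z_ia_iP_\top^i(x)$.

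\textbf{Step 2: zero TB loss.} If $x\notin S_i$, then $P_\top^i(x)=0$ and $w_i(x)=0$. If $x\in S_i$, the hypothesis gives $Z_iP_F^i(\tau)=R(x)P_B^i(\tau\mid x)$ for every $\tau$ ending at $x$. Summing over these trajectories, exactly as in the derivation following Eq.~\ref{eq:tb-condition}, gives $Z_iP_\top^i(x)=R(x)$. Therefore $w_i(x)=a_iR(x)\ind{x\in S_i}$.

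\textbf{Step 3: the argmax.} Since $R(x)>0$, the argmax over $i$ is the argmax of $a_i$ over the set $\{i:x\in S_i\}$, which is nonempty. Because the $a_i$ are strictly decreasing, this argmax is unique and equals $\min\{i:x\in S_i\}=o(x)$ by Eq.~\ref{eq:ownership}.

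The main obstacle is Step 1: justifying that the optimum is taken pointwise over unrestricted conditional distributions. This is standard if $C_\phi$ is assumed expressive enough, and I would state that assumption explicitly. A minor point is that the weights $Z_i$ must be treated as fixed constants when optimizing over $\phi$, which holds because they do not depend on $\phi$. Strict monotonicity of the $a_i$ is what guarantees there are no ties.
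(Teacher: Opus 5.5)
Your proposal is correct and follows the paper's proof essentially step for step: the pointwise weighted cross-entropy optimum $C^\star(i\mid x)\propto Z_ia_iP_\top^i(x)$, the reduction $Z_iP_\top^i(x)=R(x)$ for every covering model obtained by summing the TB identity over trajectories ending at $x$, and the argmax over the strictly decreasing $a_i$ restricted to $\{i:x\in S_i\}$. You also state explicitly that the optimum is taken over unrestricted conditionals, i.e.\ an expressive classifier; the paper relies on this implicitly when it refers to the ``population optimum.''
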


Indeed, local Trajectory Balance implies
$Z_iP_\top^i(x)=R(x)$ for every GFlowNet whose support contains $x$, so
the relative classifier weights on $x$ are determined only by the priority weights
$a_i$. The proof of Proposition~\ref{prop:ownership-recovery} is deferred
to Appendix \ref{app:proofs}. \looseness=-1

\subsection{Coordinated Training}
\label{sec:coordinated-training}
The ownership classifier turns the collected global coverage information into a local training signal for each GFlowNet. For a terminal $x$, we
define the predicted owner as
\begin{equation}
    o_\phi(x)
    =
    \textstyle\arg\max_j C_\phi(j\mid x).
    \label{eq:predicted-owner}
\end{equation}

\begin{wrapfigure}[16]{r}{0.45\textwidth}
    \vspace{-1em}
    \begin{minipage}{\linewidth}
        \small
        \hrule
        \vspace{0.5em}
        \captionsetup{type=algorithm}
        \captionof{algorithm}{HCG training}
        \label{alg:hcgfn-training}
        \vspace{-0.75em}
        \hrule
        \vspace{0.4em}
        \begin{algorithmic}[1]
            \Require $\{G_i\}_{i=0}^K$, $C_\phi$,
            $\{a_i\}_{i=0}^K$, batch size $N$,
            coordination period $T_C$
            \State $iter \gets 0$
            \While{not converged}
            \ForAll{$i=0,\ldots,K$ \textbf{ in parallel}}
            \State Sample batch $\mathcal{B}_i$ from $G_i$
            \State Update $G_i$ on $\mathcal{B}_i$
            using $\ell_i(\,\cdot\,;C_\phi)$
            \State Send $(\mathcal{B}_i,i,Z_i)$
            to coordinator
            \EndFor
            \If{$iter \bmod T_C = 0$}
            \State Update $C_\phi$ on $\{\mathcal{B}_i\}$ weighted by $Z_i a_i$
            \EndIf
            \State $iter \gets iter + 1$
            \EndWhile
        \end{algorithmic}
        \vspace{0.3em}
        \hrule
    \end{minipage}
    \vspace{-0.6em}
\end{wrapfigure}

For $G_i$, a terminal is treated as disallowed whenever its predicted owner is ranked above $i$, which defines the disallowed region $D_i=\{x : o_\phi(x)<i\}$, and we write $\ell_i(\tau;C_\phi)=\mathcal{L}(\tau;D_i,r)$.
Under the objective of Section~\ref{sec:repelling}, $G_i$ follows standard TB on terminal objects not assigned above it in the hierarchy and is driven out of those that are.

Conditional on the current classifier, each GFlowNet can be trained independently.
Each model samples trajectories from its own policy, queries the classifier for ownership, and updates locally without accessing any other GFlowNet.
Components also provide their generated terminals, model indices, and normalizers to the coordinator, which updates the classifier using the $Z_i a_i$ weighting (Section~\ref{sec:amortized-ownership}). The complete training procedure is summarized in Algorithm~\ref{alg:hcgfn-training}.

\section{Experiments}
\label{sec:results}

We evaluate HCG across several GFlowNet benchmarks to assess whether hierarchical coordination improves exploration efficiency and the discovery of diverse high-reward states compared to baseline methods designed to encourage exploration. We further evaluate whether increasing the number of components leads to broader mode coverage.

As baselines, we consider a single GFlowNet trained with trajectory balance (TB) as a reference; Sibling-augmented GFlowNets (SA)~\citep{madan2025sibling}, which pair the main GFlowNet with a sibling explorer trained with a random network distillation bonus; and Adaptive Complementary Exploration (ACE)~\citep{dallantonia2026avoid}, whose explorer is steered toward regions the main GFlowNet undersamples through a divergent TB objective.
We evaluate each benchmark using its task-specific metrics (details in Appendix~\ref{app:diffgrid}, ~\ref{appendix:qknapsack}, ~\ref{app:sequences} and ~\ref{appendix:amp}).

For DiffGrid (Sec.~\ref{ssec:diff_grid}), we measure the total variation (TV) distance between the learned terminal distribution $p_T$, estimated by backward importance sampling and normalized over the finite terminal state space, and the normalized target distribution $\pi(x) \propto R(x)$: $\mathrm{TV}(p_T,\pi) = \nicefrac{1}{2}\sum_{x\in\mathcal{X}} \left|p_T(x)-\pi(x)\right|$.
For Quadratic Knapsack (Sec.~\ref{sec:q_knapsack}) and Bit Sequences (Sec.~\ref{sec:bit_sequences}), we report the top-$K$ average reward among the unique states discovered throughout training. For AMP discovery (Sec.~\ref{ssec:peptides}), we report the cumulative count of effective peptides found over the course of training. All metrics are computed on samples from the primary GFlowNet for baselines and from the normalizer-weighted ensemble described in Section~\ref{sec:mixture} for HCG.
An ablation on the classifier update frequency and a comparison under a matched wall-clock budget are provided in Appendices~\ref{appendix:ablation_classifier} and~\ref{appendix:wall-clock}.

\subsection{DiffGrid}
\label{ssec:diff_grid}

\paragraph{Setup.}
DiffGrid tests whether hierarchical coordination improves exploration while preserving the reward-proportional target. An agent moves along either coordinate of a two-dimensional lattice or stays in place, terminating after a fixed number of steps $T=80$. Including time in the state makes the graph acyclic despite spatial revisits. We report four reward landscapes: \emph{8 Gaussians}, \emph{Rings}, \emph{Moons}, and \emph{Spiral} on an $81 \times 81$ grid across five seeds, with a budget of 10{,}000 training iterations per run. Additional reward definitions and implementation details appear in Appendix~\ref{app:diffgrid}.

\paragraph{Results.}
Table~\ref{tab:diffgrid:size40} and Figure~\ref{fig:diffgrid:epochs} show the final TV distance and its evolution throughout training, respectively, with our method using 8 components.
The gains from hierarchical coordination are largest on landscapes with well-separated modes. On 8 Gaussians and Rings, HCG substantially reduces the TV distance relative to all baselines, and continues to improve through the end of training. On Moons and Spiral, the reward mass forms connected structures rather than isolated modes, so a single sampler can traverse much of the support and coordination has less to add. Here, although ACE reduces the distance faster early in training, HCG closes the gap and performs on par with it by the end, while remaining well below TB and SA.

We further show, in Figure~\ref{fig:diffgrid:tvandsamples}, that scaling the number of components translates directly into broader coverage on 8 Gaussians. While a single GFlowNet collapses onto one mode, each additional component is steered toward modes not yet covered, so the number of discovered modes grows with the size of the ensemble. The classifier partitions the space accordingly, allowing each GFlowNet to own different modes rather than competing for the same ones. The TV distance decreases consistently as components are added, supporting our central claim that, under hierarchical coordination, additional compute becomes broader exploration.

\begin{table}[h]
    \centering
    \caption{Final total variation distance ($\downarrow$) on DiffGrid ($81\times81$ grid), reported as mean $\pm$ sample standard deviation over five seeds. Bold indicates the lowest mean in each column. HCG achieves the lowest distance on three of the four landscapes, and remains competitive on Spiral.} %
    \label{tab:diffgrid:size40}
    \small
    \setlength{\tabcolsep}{4pt}
    \begin{tabular}{lcccc}
        \toprule
        Method & 8 Gaussians & Rings & Moons & Spiral \\
        \midrule
        TB 
        & $0.8750 \pm 0.0000$
        & $0.6143 \pm 0.2650$
        & $0.5590 \pm 0.1027$
        & $0.3446 \pm 0.1217$ \\
        SA
        & $0.8241 \pm 0.0939$
        & $0.6894 \pm 0.1492$
        & $0.5071 \pm 0.2526$
        & $0.3345 \pm 0.1218$ \\
        ACE
        & $0.4946 \pm 0.3282$
        & $0.5994 \pm 0.3190$
        & $0.1795 \pm 0.1855$
        & $\mathbf{0.1057 \pm 0.0635}$ \\
        \textbf{HCG (ours)}
        & $\mathbf{0.3054 \pm 0.1249}$
        & $\mathbf{0.3324 \pm 0.2637}$
        & $\mathbf{0.1703 \pm 0.1534}$
        & $0.1340 \pm 0.0873$ \\
        \bottomrule
    \end{tabular}
\end{table}

\begin{figure}[h]
    \centering
    \includegraphics[width=\linewidth]{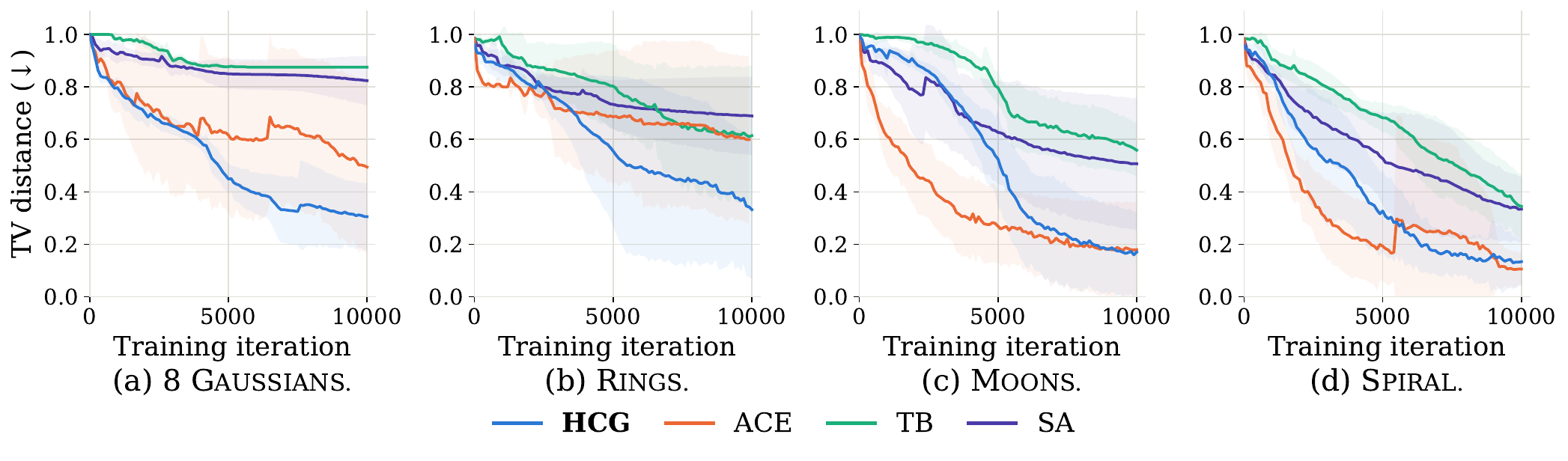}
    \vspace{-2em}
    \caption{Total variation distance ($\downarrow$) to the target distribution over training on DiffGrid rewards ($81\times 81$ grid). Lines show the mean over five seeds and shaded bands $\pm 1$ std. On 8 Gaussians and Rings, HCG continues to improve throughout training and ends with the lowest TV distance. On Moons and Spiral, whose reward mass forms connected structures rather than isolated modes, HCG remains competitive with ACE, the strongest baseline.}
    \vspace{-1.5em}
    \label{fig:diffgrid:epochs}
\end{figure}

\begin{figure}[t]
    \centering
    \includegraphics[width=\linewidth]{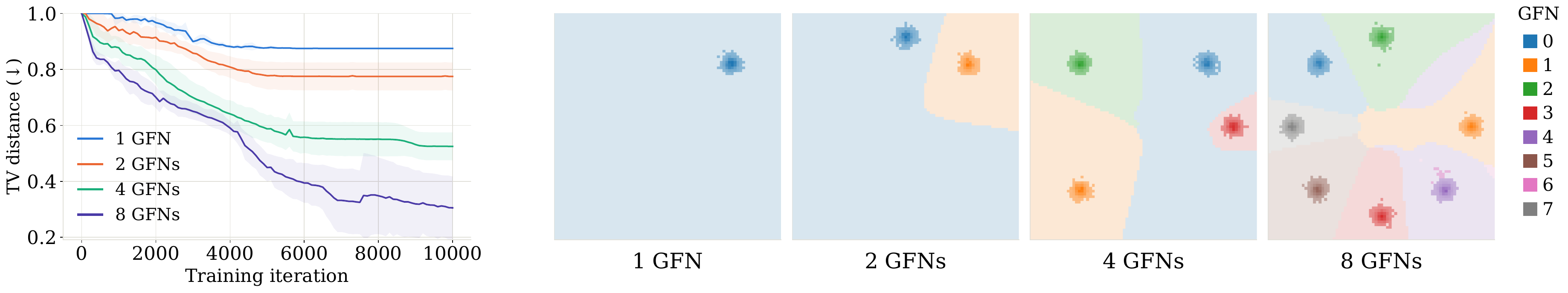}
    \caption{Effect of the number of GFlowNets on the DiffGrid 8~Gaussians reward on an $81\times81$ grid. \textbf{Left:} TV distance ($\downarrow$) to the target distribution over training. Lines show the mean over five seeds and bands $\pm 1$ std. \textbf{Right:} final samples for ensembles of 1, 2, 4, and 8 GFlowNets. Each sample is colored by the GFlowNet that generated it, and the background hue shows the region the classifier assigns to each GFlowNet. While a single GFlowNet collapses onto one mode, each additional component is steered toward a mode not yet covered, so that the number of discovered modes grows with the ensemble and the TV distance decreases accordingly.}
    \label{fig:diffgrid:tvandsamples}
    \vspace{-1em}
\end{figure}

\subsection{Quadratic Knapsack}
\label{sec:q_knapsack}

\paragraph{Setup.}
The Quadratic Knapsack Problem (QKP) is an NP-hard combinatorial optimization task that extends the classic 0-1 knapsack problem with pairwise dependencies among selected elements \citep{Caprara1999}. 
Given a set of $n$ items, let $x \in \{0, 1\}^n$ denote the binary decision vector, where $x_i = 1$ if item $i$ is included in the knapsack\footnote{$1 \leq x_i \leq L$ when multiple copies of items are allowed.}.
Each item $i$ has a weight $w_i > 0$, subject to a knapsack capacity constraint $C > 0$.
The objective is parameterized by a linear profit vector $p \in \mathbb{R}^n$ and a symmetric interaction matrix $P \in \mathbb{R}^{n \times n}$, where $P_{ij}$ represents the complementary or substitutive utility of selecting items $i$ and $j$ simultaneously ($i \neq j$).
The QKP task provides a challenging benchmark where trajectories correspond to sequential item insertions until the weight capacity is reached.
Reward definitions and implementation details appear in Appendix \ref{appendix:qknapsack}.

\begin{figure}[t]
    \centering
    \includegraphics[width=\linewidth]{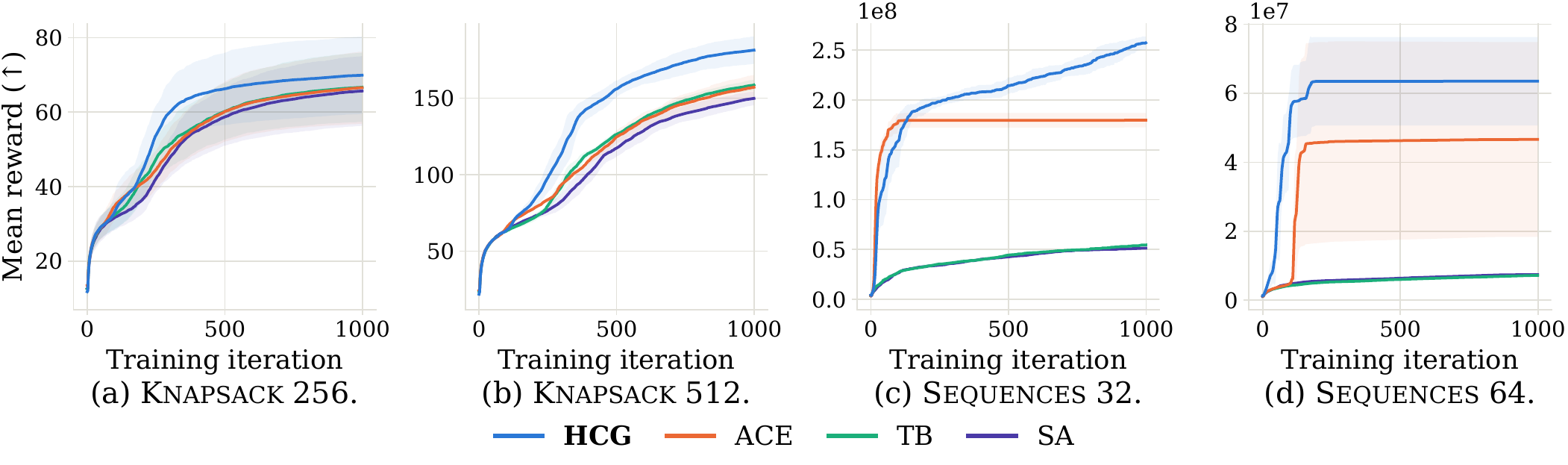}
    \caption{Top-200 mean reward ($\uparrow$) over two environments using two configurations:
  the knapsack task at $n=256$ and $n=512$, and the multi-modal bit-sequence task at $M=32$ and $M=64$.
  Results are the mean of 3 seeds and bands $\pm 1$ std.
  HCG reaches the highest top-200 reward in all four conditions, and its margin over the strongest baseline widens as the problem grows. This is more pronounced in the bit-sequences task ($M = 64$), where most baselines stall below $10^7$, ACE plateaus and HCG reaches more than $6\times10^7$ mean reward.}
    \label{fig:top_200}
    \vspace{-1em}
\end{figure}

\paragraph{Results.}
Fig. \ref{fig:top_200} (a) and (b) show the mean top-200 rewards maintained over all trajectories a method generates along 1000 training iterations, with $n$ in $\{256, 512\}$, respectively.
HCG exhibits the best iteration efficiency in the search for high-valued feasible solutions for the problem, when compared to the baselines.
In the more complex instance (512 items), the gap between HCG and the contenders is more pronounced, highlighting the exploration advantages of our proposal in this scenario.\looseness-1

\subsection{Bit Sequences}
\label{sec:bit_sequences}

\paragraph{Setup.}
We consider the bit sequence environment introduced by \citet{malkin2022trajectory}.
Fix a target length $M$ and let $\mathcal{X}=\{0,1\}^{M}$ be the set of terminal states and $\mathcal{S}=\bigcup_{m=0}^{M-1}\{0,1\}^{m}$ the set of intermediate ones.
Generation starts from the empty string $s_o=[]$ and appends a single bit to the right of the current prefix at each step.
The action set is $\{0,1\}$ at every state, no stop action is available, and generation halts as soon as the prefix reaches length $M$.
Every trajectory consequently has length exactly $M$, and the transition graph is a complete binary tree: each terminal state is reached by a unique path, so the backward policy is deterministic and carries no free parameters.
Mode generation and reward definitions appear in Appendix \ref{app:sequences}
We report results for $M \in \{32,64\}$, both multiples of $8$, so the block construction applies without truncation or padding.

\paragraph{Results.}
For bit sequences of length $32$ and $64$, the mean top-200 reward achieved by each method over 1000 iterations is shown in Fig.~\ref{fig:top_200}~(c) and (d), respectively.
In the more complex setting (sequences of 64 bits), HCG rapidly converges to a substantially higher mean reward, followed by ACE, while the baseline methods exhibit more limited progress.
Additionally, in the sequences of 32 bits task, HCG sustains steady improvement throughout the training process, successfully avoiding the premature plateau observed in ACE baseline.\looseness=-1

\subsection{Antimicrobial Peptide (AMP) Discovery}
\label{ssec:peptides}

\begin{wrapfigure}[17]{r}{0.53\textwidth}
    \vspace{-1.5em}
    \centering
    \includegraphics[width=\linewidth]{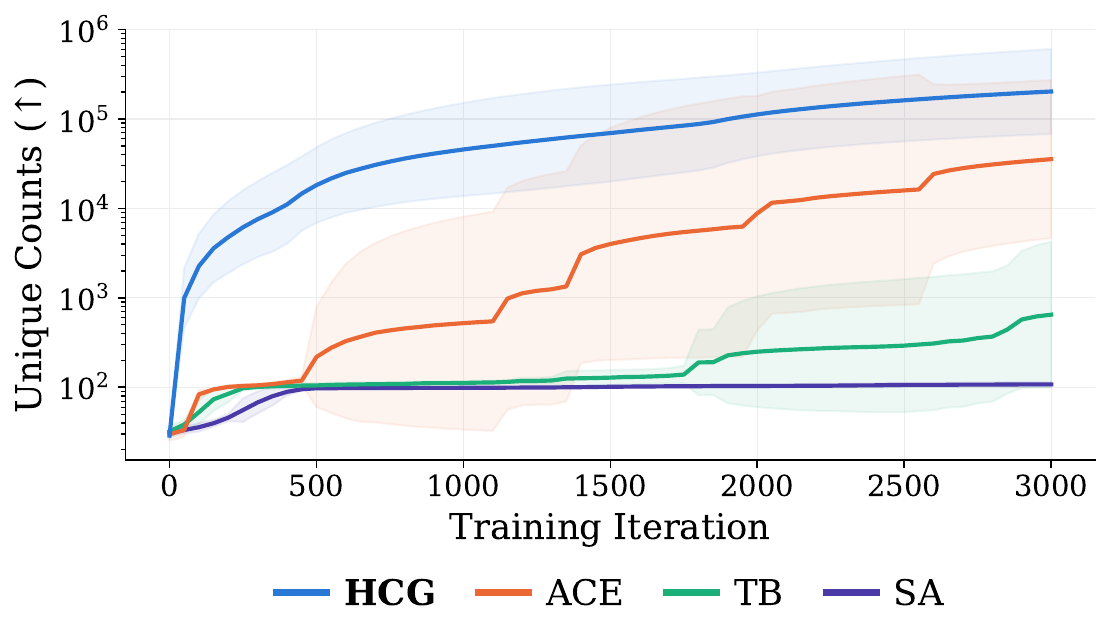}
    \caption{Cumulative count of unique high-scoring peptides discovered per method throughout training (log scale). Lines show the mean over 5 seeds, and bands span $\pm 1$ std. HCG shows faster discovery and sustained exploration over training.}
    \label{fig:peptides}
    \vspace{-0.5em}
\end{wrapfigure}

\paragraph{Setup.}
We also evaluate HCG in the task of AMP design \citep{trabucco2022designbench, jain2022biological}, where we intend to generate peptides with antimicrobial properties. Candidate peptides have variable length from 1 to 10, drawn from a vocabulary of size 20, resulting in a search space of approximately $10^{13}$ candidates. The reward is the maximum probability across an ensemble of five Random Forest classifiers, each estimating activity against a distinct pathogen. To avoid overweighting top-scoring sequences, we clip the log-reward at zero for peptides with predicted probability $\geq 0.95$. Further details are provided in Appendix \ref{appendix:amp}.

\paragraph{Results.}
Figure~\ref{fig:peptides} shows that HCG with four GFlowNets discovers substantially more unique effective peptides than all other methods. While TB and SA quickly saturate at a small number of peptides, consistent with samplers that collapse onto a few high-reward regions, HCG surpasses their final counts within the first hundred iterations. ACE avoids this early saturation, but progresses in abrupt jumps, suggesting that its explorer occasionally uncovers new regions without sustaining their exploration. HCG, in contrast, follows its fast initial phase with steady discovery throughout training and concludes well ahead of ACE, as each lower-ranked component is continually pushed away from regions already covered by those above it. Finally, HCG varies less across seeds than ACE, whose band spans several orders of magnitude, indicating more consistent discovery.

\section{Conclusion}

We have introduced Hierarchical Cooperative GFlowNets, a method that trains several GFlowNets in parallel under a priority hierarchy, coordinated through a shared ownership classifier.
This design turns additional compute into complementary rather than redundant exploration without requiring a decomposition of the reward or the state graph, and its normalizer-weighted ensemble provably recovers the reward-proportional target under mild assumptions.
Our experiments across synthetic, combinatorial, and biological sequence design tasks show that HCG outperforms existing exploration methods in most settings, with the clearest gains on sparse rewards whose modes lie far from one another.\looseness=-1

Although a constant auxiliary reward within disallowed regions already yields strong results, the design of a reward that actively accelerates a component's escape toward admissible regions is a promising direction for future work. Likewise, incorporating inductive biases suited to the structure of the environment into the ownership classifier, and analyzing their empirical and theoretical implications, is a natural next step. Overall, we believe HCG's readiness for distributed computing architectures may be especially useful for large-scale applications such as drug discovery and biological sequence design, where finding many diverse high-reward candidates is the goal.\looseness=-1

\looseness-1

\subsection*{Acknowledgments}

We acknowledge the support of the Funda\c{c}\~{a}o Carlos Chagas Filho de Amparo \`{a} Pesquisa do Estado do Rio de Janeiro (FAPERJ) (\mbox{SEI-260003/020348/2025}, \mbox{SEI-260003/020694/2025}) and the Conselho Nacional de Desenvolvimento Cient\'{i}fico e Tecnol\'{o}gico (CNPq) (\mbox{404336/2023-0}, \mbox{305692/2025-9}, \mbox{445170/2024-7}). E. da Silva's work is funded by Portuguese national funds through FCT~-- Foundation for Science and Technology, I.P., within the scope of the research unit \mbox{UID/00326}~- Centre for Informatics and Systems of the University of Coimbra.

\bibliographystyle{iclr2027_conference}
\bibliography{iclr2027_conference}

\appendix

\section{Proofs}
\label{app:proofs}

\subsection{Proof of Proposition~\ref{prop:restricted-target}}
\label{app:proof-restricted-target}

Since $\mathcal{L}(\tau;D,r)\geq 0$, it is enough to characterize when the
loss can vanish. For a trajectory terminating at $x\in D$,
Equation~\ref{eq:repelling-objective} gives
\[
    \mathcal{L}(\tau;D,r)
    =
    \left[
        \log\left(
            1+
            \frac{ZP_F(\tau)}
                 {r(x)P_B(\tau\mid x)}
        \right)
    \right]^2.
\]
Hence, zero loss requires $P_F(\tau)=0$ for every trajectory terminating
in $D$. For $x\notin D$, the objective reduces to Trajectory Balance, and
zero loss requires
\[
    ZP_F(\tau)
    =
    R(x)P_B(\tau\mid x).
\]
Summing over all trajectories terminating at $x$ and using
$\sum_{\tau:\tau\to x}P_B(\tau\mid x)=1$ yields
\begin{equation}
    P_\top(x)=0
    \quad \text{for } x\in D,
    \qquad
    ZP_\top(x)=R(x)
    \quad \text{for } x\notin D.
    \label{eq:restricted-target-proof}
\end{equation}
Since $P_\top$ is normalized,
\[
    1
    =
    \sum_{x\notin D}P_\top(x)
    =
    \frac{1}{Z}\sum_{x\notin D}R(x),
\]
which gives $Z=Z^\star=\sum_{x\notin D}R(x)$ and therefore
$P_\top^\star(x)\propto R(x)\ind{x\notin D}$. Conversely, the standard
Trajectory Balance construction on this restricted target satisfies the
conditions above and attains zero loss. This proves the result.

\subsection{Proof of the corollary to Proposition~\ref{prop:restricted-target}}
\label{app:proof-vanishing-normalizer}

When $D=\mathcal{X}$, every complete trajectory terminates in the
disallowed region, so
\[
    \mathcal{L}(\tau;\mathcal{X},r)
    =
    \left[
        \log\left(
            1+
            \frac{ZP_F(\tau)}
                 {r(x)P_B(\tau\mid x)}
        \right)
    \right]^2.
\]
For any $Z>0$, a normalized forward policy assigns positive probability
to at least one complete trajectory, for which the quantity inside the
logarithm is strictly greater than one. Therefore, no zero-loss solution
exists for $Z>0$. On the other hand, for fixed policies,
$\mathcal{L}(\tau;\mathcal{X},r)\rightarrow 0$ as $Z\rightarrow0$ for
every trajectory carrying forward probability. Hence the infimum is
approached as $Z\rightarrow0$.

\subsection{Proof of Proposition~\ref{prop:auxiliary-reward-bias}}
\label{app:proof-auxiliary-reward-bias}

For $D=\mathcal{X}$, the on-policy expected objective is
\[
    \mathbb{E}_{\tau\sim P_F}
    \left[
        \log^2\left(
            1+
            \frac{ZP_F(\tau)}
                 {r(x)P_B(\tau\mid x)}
        \right)
    \right].
\]
Consider first the trajectories terminating at a fixed $x$ with
$P_\top(x)>0$. Writing
$P_F(\tau)=P_\top(x)P_F(\tau\mid x)$, their contribution to the
expectation is
\begin{align}
    P_\top(x)
    \sum_{\tau:\tau\to x}
    P_B(\tau\mid x)
    \frac{P_F(\tau\mid x)}{P_B(\tau\mid x)}
    \log^2\left(
        1+
        \frac{ZP_\top(x)}{r(x)}
        \frac{P_F(\tau\mid x)}{P_B(\tau\mid x)}
    \right).
    \label{eq:auxiliary-proof-conditional}
\end{align}
For every $c>0$, the function
$u\mapsto u\log^2(1+cu)$ is strictly convex on $\mathbb{R}_{+}$, since
\[
    \frac{\mathrm{d}^2}{\mathrm{d}u^2}
    \left[
        u\log^2(1+cu)
    \right]
    =
    \frac{
        2c\left[
            cu\left(1+\log(1+cu)\right)
            +2\log(1+cu)
        \right]
    }{
        (1+cu)^2
    }
    >0
\]
for $u>0$. Moreover,
\[
    \sum_{\tau:\tau\to x}
    P_B(\tau\mid x)
    \frac{P_F(\tau\mid x)}{P_B(\tau\mid x)}
    =
    1.
\]
Jensen's inequality applied to
Equation~\ref{eq:auxiliary-proof-conditional} therefore shows that its
minimum, for fixed $P_\top(x)$, is attained when
$P_F(\tau\mid x)=P_B(\tau\mid x)$, and is equal to
\[
    P_\top(x)
    \log^2\left(
        1+\frac{ZP_\top(x)}{r(x)}
    \right).
\]

It remains to optimize the terminal probabilities. The resulting
objective is
\[
    \sum_{x\in\mathcal{X}}
    r(x)
    \left[
        \frac{P_\top(x)}{r(x)}
        \log^2\left(
            1+
            Z\frac{P_\top(x)}{r(x)}
        \right)
    \right].
\]
Applying Jensen's inequality again, now with weights proportional to
$r(x)$, equality holds if and only if $P_\top(x)/r(x)$ is constant over
$\mathcal{X}$. Since $P_\top$ is normalized,
\[
    P_\top^\star(x)
    =
    \frac{r(x)}
         {\sum_{x'\in\mathcal{X}}r(x')},
\]
which proves $P_\top^\star(x)\propto r(x)$.

\subsection{Proof of Proposition~\ref{prop:ownership-recovery}}
\label{app:proof-ownership-recovery}

For a fixed terminal $x$, Equation~\ref{eq:classifier-loss} is a weighted
cross-entropy whose class weights are
$Z_i a_i$. Its population optimum therefore satisfies
\begin{equation}
    C^\star(i\mid x)
    =
    \frac{
        Z_i a_i P_\top^i(x)
    }{
        \sum_{j=0}^{K}
        Z_j a_j P_\top^j(x)
    }.
    \label{eq:classifier-optimum-proof}
\end{equation}
For every GFlowNet whose support contains $x$, zero Trajectory Balance
loss on all trajectories terminating at $x$ implies
\[
    Z_iP_F^i(\tau)
    =
    R(x)P_B^i(\tau\mid x).
\]
Summing over those trajectories gives
$Z_iP_\top^i(x)=R(x)$. Models whose support does not contain $x$ instead
have $P_\top^i(x)=0$. Equation~\ref{eq:classifier-optimum-proof} thus
reduces, among the models covering $x$, to
\[
    C^\star(i\mid x)
    =
    \frac{a_i}
         {\sum_{j:x\in S_j}a_j}.
\]
Since $a_0>a_1>\cdots>a_K>0$, the largest probability is assigned to the
smallest index whose support contains $x$. By
Equation~\ref{eq:ownership}, this index is exactly $o(x)$, and therefore
\[
    \arg\max_i C^\star(i\mid x)=o(x).
\]

\section{Experiment Details}

\subsection{DiffGrid}
\label{app:diffgrid}

\paragraph{Environment and rewards.} We use a two-dimensional lattice with half-width $m$ and horizon $T=2m$, with actions that increment or decrement one coordinate, or leave the position unchanged. We write states as $(u,t)$ with $u\in\{-m,\ldots,m\}^2$ and initialize the process at $(u,t)=((0,0),0)$. Each action advances the clock, boundary-crossing actions are masked, and termination occurs at $t=T$, so every trajectory contains exactly $2m$ transitions. Time augmentation makes the construction graph acyclic despite allowing spatial revisits. The terminal object $x$ is the final lattice position.

The 8 Gaussians and Rings landscapes are based on \citet{dallantonia2026avoid}, with the following implementation-specific scales. We define the unnormalized landscape densities
\begin{align}
    D_{\mathrm{rings}}(x)
    &= 10\sum_{r\in\{0.2m,\,0.8m\}}
       \exp\!\left[-\frac{(\lVert x\rVert_2-r)^2}{2\sigma^2}\right],
       \label{eq:diffgrid-rings}\\
    D_{\mathrm{8G}}(x)
    &= \sum_{j=0}^{7}
       \exp\!\left[-\frac{\lVert x-\mu_j\rVert_2^2}{2\sigma^2}\right],
       \label{eq:diffgrid-gaussians}
\end{align}
where $\sigma=m/30$ and $\mu_j=0.8m\bigl(\cos(2\pi j/8),\sin(2\pi j/8)\bigr)$. Thus, at $m=18$, the Gaussian width and ring thickness are both $0.6$.

We additionally evaluate Moons and Spiral rewards, each constructed as a sum of Gaussian kernels with the same width $\sigma=m/30$. For Moons, let $r=0.6m$, $\delta=0.5r$, $g=0.3r$, and $\theta_k=\pi k/255$ for $k=0,\ldots,255$. The anchors are
\begin{align}
    \mu_k^{(1)} &= (-\delta+r\cos\theta_k,\ r\sin\theta_k),\\
    \mu_k^{(2)} &= (\delta+r\cos(\pi+\theta_k),\ -g+r\sin(\pi+\theta_k)),
\end{align}
and the density is
\begin{equation}
    D_{\mathrm{moons}}(x)
    =\sum_{q=1}^{2}\sum_{k=0}^{255}
      \exp\!\left[-\frac{\lVert x-\mu_k^{(q)}\rVert_2^2}{2\sigma^2}\right].
\end{equation}
This gives 256 anchors on an upper semicircle centered at $(-\delta,0)$ and 256 on a lower semicircle centered at $(\delta,-g)$, for 512 equally weighted kernels in total.

For Spiral, let $b=0.85m/(4\pi)$ and $\mu(\theta)=b\theta(\cos\theta,\sin\theta)$ for $\theta\in[0,4\pi]$. We place 512 anchors uniformly in arc length along this curve,
\begin{equation}
    \begin{gathered}
    S(\theta)=\frac{b}{2}\left[\theta\sqrt{1+\theta^2}
                   +\operatorname{asinh}\theta\right],\\
    S(\theta_k)=\frac{k}{511}S(4\pi),
    \qquad k=0,\ldots,511,
    \end{gathered}
\end{equation}
where the implementation numerically inverts $S$. The density is
\begin{equation}
    D_{\mathrm{spiral}}(x)
    =\sum_{k=0}^{511}
      \exp\!\left[-\frac{\lVert x-\mu(\theta_k)\rVert_2^2}{2\sigma^2}\right].
\end{equation}
The spiral has one arm making two counterclockwise turns, starting at the origin in the positive horizontal direction and ending at radius $0.85m$. Neither anchor-based density is divided by its number of kernels.

For each landscape, the reward used in training is
\begin{equation}
    R(x)=\exp(1)\bigl[(1-\lambda)D(x)+\lambda\bigr],
    \qquad \lambda=10^{-6}.
\end{equation}
Here we omit the numerical $10^{-30}$ stabilizer added to the kernel sums. The positive floor preserves full target support. The factor $\exp(1)$ corresponds to an additive offset of one in log reward and cancels from the normalized target; the Rings factor of ten is applied before adding the floor. The four landscapes test isolated peaks, separated radial modes, curved modes, and extended winding structure, respectively.

\begin{figure}[t]
    \centering
    \includegraphics[width=\linewidth]{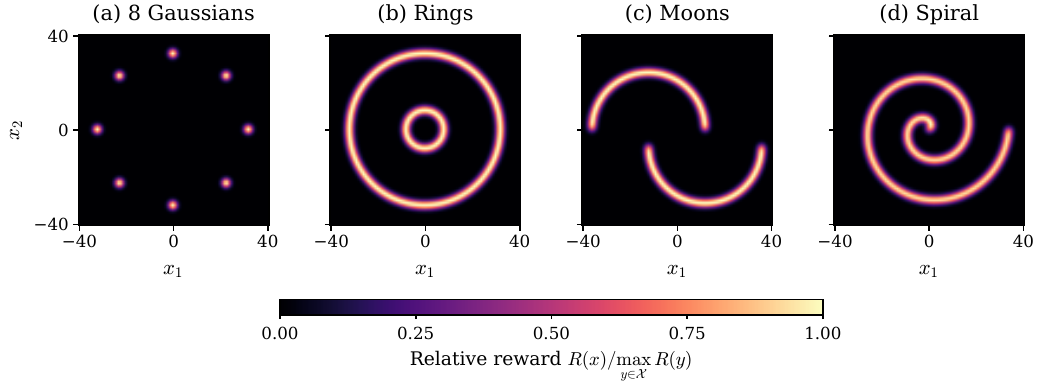}
    \caption{DiffGrid reward landscapes on an $81\times81$ grid ($m=40$). Each cell shows the implemented reward divided by its maximum within that landscape, using a shared linear color scale.
    (a) Eight Gaussian peaks lie on a circle of radius $0.8m$.
    (b) Two rings have radii $0.2m$ and $0.8m$.
    (c) Two offset semicircular arcs are represented by 256 Gaussian anchors each.
    (d) A single spiral arm makes two turns, with 512 anchors spaced uniformly in arc length and outer radius $0.85m$.
    All landscapes use width $\sigma=m/30=4/3$ and floor parameter $\lambda=10^{-6}$.}
    \label{fig:diffgrid-rewards}
\end{figure}

\paragraph{Comparisons and training protocol.}
We compare against TB \citep{malkin2022trajectory}, ACE \citep{dallantonia2026avoid}, and SA \citep{madan2025sibling}. HCG uses eight components by default, each with a forward policy of two hidden layers of 128 ReLU units and a linear backward policy. Both policies take the position, normalized time, and sine and cosine features at eight time frequencies as inputs, with invalid actions masked. The ownership classifier takes the terminal position as input and has two hidden layers of 512 ReLU units. We use AdamW with learning rates of $5\times10^{-3}$ for the policies, $5\times10^{-2}$ for the log-normalizers, and $10^{-3}$ for the classifier, and train for 10{,}000 epochs with 1{,}024 trajectories and one update per component per epoch. Training actions are sampled from the forward policy mixed with a uniform distribution over valid actions, with $\epsilon=0.1$.

Ownership is assigned by classifier argmax with priority decay $a=0.5$. After each iteration, the classifier receives ten updates on balanced minibatches of 2{,}048 examples drawn from a replay buffer of 65{,}536 examples, divided equally across components, with each example storing the log-normalizer of the model that generated it. Results use four or eight components and five seeds (40--44). ACE and SA use one hidden layer of 64 ReLU units in each policy.

\paragraph{Evaluation.}
We evaluate every 100 epochs. Although the finite time-augmented graph permits exact forward dynamic programming, the current evaluator estimates terminal marginals by backward importance sampling. For each component and each lattice terminal $x$, it samples $B=15$ backward trajectories and computes
\begin{equation}
    \widehat p_i(x)=\frac{1}{B}\sum_{b=1}^{B}
       \frac{P_F^i(\tau_b)}{P_B^i(\tau_b\mid x)},
    \qquad \tau_b\sim P_B^i(\cdot\mid x).
\end{equation}
These estimates are normalized over the lattice before forming the ensemble. For our method, the distribution being evaluated is the normalizer-weighted ensemble $P_{\mathrm{ens}}$ in Section~\ref{sec:mixture}, rather than an individual component or a uniform mixture. Its total variation distance to the target is
\begin{equation}
    \operatorname{TV}(P_{\mathrm{ens}},\pi)
    =\frac{1}{2}\sum_{x\in\mathcal{X}}
      \left|P_{\mathrm{ens}}(x)-\frac{R(x)}{\sum_{y\in\mathcal{X}}R(y)}\right|.
    \label{eq:diffgrid-tv}
\end{equation}
The logged TV substitutes the normalized Monte Carlo estimates for the component terminal marginals, so it is an estimated rather than exact TV.

\subsection{Quadratic Knapsack Experiment Details}
\label{appendix:qknapsack}

\paragraph{Environment Generation.} Using the variables defined in Section \ref{sec:q_knapsack}, the canonical optimization problem for the quadratic knapsack problem is
\begin{equation}
    \max_{x \in \{0, 1\}^n} \quad f(x) = \sum_{i=1}^n p_i x_i + \sum_{i=1}^n \sum_{j > i}^n P_{ij} x_i x_j \quad \text{s.t.} \quad \sum_{i=1}^n w_i x_i \le C.
\end{equation}
The pairwise interaction terms introduce an energy landscape characterized by widespread multimodality and isolated modes, rendering uniform exploration and mode discovery non-trivial.
The reward is conventionally defined over terminating states as $R(x) = f(x)$ for feasible configurations ($R(x) = 0$ otherwise).

Item weights are drawn as $w_i \sim \mathcal{U}\{2, \dots, C-1\}$, linear profits as $p_i \sim \mathcal{U}(0.1, 1)$, and the interaction matrix as $P_{ij} = \sigma \varepsilon_{ij} b_{ij}$ for $j > i$ with $\varepsilon_{ij} \sim \mathcal{N}(0,1)$, $b_{ij} \sim \mathrm{Bernoulli}(\rho)$ at sparsity $\rho = 0.15$ and $\sigma = 1$, symmetrized as $P \leftarrow P + P^\top$.
Two adjustments make the landscape well-posed for a GFlowNet sampler. First, we add $\mathrm{diag}(\sum_j |P_{ij}| + 0.1)$ to the interaction matrix, which absorbs into the linear profit term for integral $x$ and guarantees $f(x) > 0$ on all reachable states, so that the log-reward $\log R(x) = \log f(x)$ is finite everywhere.
Second, interaction entries are rescaled by $1/\sqrt{w_i w_j}$, which decouples the magnitude of a pair's utility from the magnitude of its weight and prevents heavy items from dominating the profit surface purely by scale.
Trajectories insert one feasible item per step and terminate when the capacity admits no further insertion.
Feasibility is enforced by masking the forward policy's logits to items satisfying both $\sum_i w_i x_i + w_j \le C$ and $x_j < \kappa$, so the infeasible branch $R(x) = 0$ is never realized and no probability mass is wasted on rejection; the backward policy is uniform over removable items.
The instance ($w$, $p$, $P$) is drawn from a dedicated seed held fixed across all methods, distinct from the seed governing network initialization and sampling.

\paragraph{Comparisons and training protocol.} We compare four samplers, all sharing an identical masked-softmax MLP policy (2 hidden layers of width $d=512$, one scalar $\log Z$ per model) and optimized for 1000 iterations with Muon under a warmup--cosine schedule peaking at $10^{-3}$ for network parameters, and with Adam at $10^{-1}$ for $\log Z$.
\begin{itemize}[itemsep=0pt,topsep=0pt,left=8pt]
    \item TB trains a single GFlowNet with trajectory balance on $\varepsilon$-greedy on-policy rollouts ($\varepsilon = 0.1$), and serves as the no-diversity-mechanism reference.
    \item ACE \citep{dallantonia2026avoid}: the explorer minimizes a softplus-clamped TB residual against a tempered reward $R^\beta$ with $\beta = 0.25$, clamping only those trajectories whose student residual exceeds $\log \alpha$ with $\alpha = 0.3$, while the student combines TB on its own trajectories with TB on backward-evaluated explorer trajectories under the adaptive coefficient $\lambda = \sigma(\log Z - \log Z_{\mathrm{exp}})$. Rollouts are $\varepsilon$-greedy and on-policy ($\varepsilon = 0.1$).
    \item SA \citep{madan2025sibling} augments the sampler with a sibling network trained on a compounded objective that mixes an RND intrinsic bonus with the tempered extrinsic reward ($\beta = 0.25$), the main network taking equally weighted TB losses on its own and on the sibling's trajectories, also using $\varepsilon$-greedy on-policy rollouts ($\varepsilon = 0.1$).
    \item HCG trains an ensemble of two components strictly on-policy. The classifier is a two-hidden-layer MLP over raw terminal-state vectors, updated every iteration with $4$ gradient steps (Adam, $10^{-3}$) on class-balanced batches of 512 drawn from a per-component replay buffer of capacity $10^4$, under a $Z$-weighted with priority decay $a = 0.5$.
\end{itemize}

To equalize the per-iteration trajectory budget, methods that roll out two samplers per step (ACE, SA, HCG) use half the batch size of the single-rollout methods.

\subsection{Bit Sequences Experiment Details}
\label{app:sequences}

\paragraph{Environment Generation.} Following the variable definitions of Section \ref{sec:bit_sequences}, given a mode set $\mathcal{M}\subseteq\mathcal{X}$, we define
\[
    \log R(x)\;=\;\frac{1}{T}\Bigl(1-\frac{1}{M}\min_{m\in\mathcal{M}}d_H(x,m)\Bigr),\qquad T=\tfrac{1}{20},
\]
with $d_H$ the Hamming distance between two length-$M$ binary strings.
The log-reward is therefore maximal, equal to $20$ on the modes, and falls to $0$ at maximal distance from all of them, shrinking by a factor $e^{-20/M}$ for each additional mismatched bit.
Mass under $R$ is thus concentrated in narrow Hamming balls around $\mathcal{M}$, separated by broad regions of exponentially suppressed reward, so the task rewards a sampler that discovers and covers many well-separated basins rather than one that refines a single high-reward string.

The modes are compositional rather than spread uniformly over the hypercube.
Writing $K=8b$, each mode is formed by concatenating $b$ blocks drawn independently and uniformly from the five fixed 8-bit motifs $\texttt{00000000}$, $\texttt{11111111}$, $\texttt{11110000}$, $\texttt{00001111}$, and $\texttt{00111100}$, so that nearby modes share long substrings and the basins are structured rather than isotropic.
The mode set is drawn once under a fixed seed and held constant across methods and across learner seeds, so that all methods are compared against exactly the same target.

All methods use the same training configuration as in the Quadratic Knapsack experiment (Appendix~\ref{appendix:qknapsack}), also for $1000$ iterations.

\subsection{AMP Discovery Experiment Details}
\label{appendix:amp}

\paragraph{Environment.}
A trajectory writes a peptide of variable length $y_{1:L}$, $1\le L\le L_{\max}=10$, adding one amino acid per step from a 20-symbol vocabulary ($19$ amino acids, no Cysteine, plus \texttt{STOP}). When the sequence reaches length $L_{\max}$, termination is forced. The backward policy is the deterministic move that undoes the last forward action (removing the last token), so $\log P_B(\tau \mid y) = 0$ for every completed sequence.

\paragraph{Reward model.}
Following the model-based design protocol of \citet{angermueller2020population}, the reward is the maximum predicted activity $p_{\mathrm{RF}}(y)$ across an ensemble of five binary Random-Forest classifiers trained on GRAMPA, one per pathogen (\textit{E.\,coli}, \textit{S.\,aureus}, \textit{P.\,aeruginosa}, \textit{B.\,subtilis}, and \textit{C.\,albicans}). With cutoff $c=0.95$, temperature $T=0.3$, and margin $\Delta(y) = \operatorname{logit} p_{\mathrm{RF}}(y) - \operatorname{logit} c$,
\[
\log R(y) =
\begin{cases}
0, & p_{\mathrm{RF}}(y) \ge c,\\[2pt]
\dfrac{L+1}{T}\,\Delta(y), & p_{\mathrm{RF}}(y) < c,
\end{cases}
\qquad \text{clipped to } [-30, 0].
\]
Sequences above the cutoff saturate at unit reward, preventing a narrow set of top-scoring candidates from dominating the reward landscape \citep{dallantonia2026boostedgflownetsimprovingexploration}.

\paragraph{Training configuration.}
Table~\ref{tab:amp-defaults} lists the default hyperparameters and configurations shared across the variants reported on this task.

\begin{table}[h]
\centering
\caption{Default training setup for peptide generation.}
\label{tab:amp-defaults}
\begin{tabularx}{\textwidth}{l X}
\toprule
Component & Setting \\
\midrule
Sequences & 19 amino acids + \texttt{STOP}; $1\le L\le L_{\max}=10$ \\
Batch size & 4096 trajectories per rollout \\
Exploration $(\varepsilon)$ & 0.1 \\
Seeds & $\{10,\dots,14\}$ \\
Policy & Windowed autoregressive MLP (embedding 32, context 8, hidden 64) over $|\mathcal V|=20$ actions \\
Optimizer & AdamW; learning rates $(P_F, \log Z) = (5\times10^{-2}, 10^{-1})$ \\
Reward shaping & RF cutoff $c=0.95$; temperature $T=0.3$; $\log R$ clipped to $[-30,0]$ \\
\bottomrule
\end{tabularx}
\end{table}

\paragraph{Evaluation.}
Every $50$ epochs we draw $N=10000$ trajectories at $\varepsilon=0$, deduplicate terminal sequences, and score them with the RF ensemble. We report the cumulative count of distinct high-confidence sequences ($p_{\mathrm{RF}} \ge 0.95$) discovered across all evaluation checkpoints up to the current epoch, rather than the per-checkpoint count, so that the metric rewards sustained discovery of new effective peptides.

\section{Ablation: Classifier Update Frequency}
\label{appendix:ablation_classifier}
The ownership classifier $C_\phi$ is the only component shared by all GFlowNets.
Each component consults it on every update to decide which states lie in its disallowed region, and it is the sole channel through which the components influence one another, as everything else in the method is local.
The update frequency of $C_\phi$ therefore bounds how loosely the components can be coupled, and determines the communication cost when training is distributed.

This ablation measures this tolerance directly.
Let $c$ be the number of iterations between classifier updates, and $s$ the number of gradient steps taken at each update. Our default is $(c, s) = (1, 4)$.
We ask how far $c$ can be raised before HCG degrades, and whether any degradation is attributable to staleness or merely to performing fewer updates in total.

\paragraph{Setup.} We use the bit-sequence task at $M = 128$ with a binary alphabet and $1024$ modes.
The environment is generated once from a fixed seed and reused by every arm and every seed, so all runs face the same $1024$ modes.
We train two components for $2000$ iterations with a total batch of $64$ trajectories per iteration, split evenly across components so that every arm consumes the same number of trajectories.
HCG is trained strictly on-policy ($\epsilon = 0$).
The classifier uses a replay buffer of $10^4$ samples, balanced batches of $512$, and priority decay $a = 0.5$ in the $Z_i a_i$ weighting.

We sweep $c \in \{1, 10, 25, 50, 100\}$ in two families.
In the \emph{fixed-steps} family, $s = 4$ is held constant, so the total number of classifier gradient steps falls from $8000$ to $80$ as $c$ grows.
In the \emph{compute-matched} family, $s = 4c$, holding the total at $8000$ updates for every arm, so it varies staleness and burstiness at a fixed budget.
The two families meet at $(c, s) = (1, 4)$, the default configuration, which serves as their shared reference point.
We report a Trajectory Balance baseline at the same trajectory budget, trained off-policy with $\epsilon = 0.1$.
We use three seeds, which vary initialization and rollouts only, while the environment and reward instance are held fixed.

Pooled total variation is comparable only within a single evaluation call, because it is defined against a support pooled over the models being compared.
We therefore train every arm of a seed in one process and score them together in one call, over a shared support of $20480$ unique terminal states.
FCS is computed per model against uniform probes and is comparable across calls.
Top-200 mean reward is read from the training queue, which accumulates the highest-reward unique terminal states seen during training.

\begin{figure}[t]
    \centering
    \includegraphics[width=0.75\linewidth]{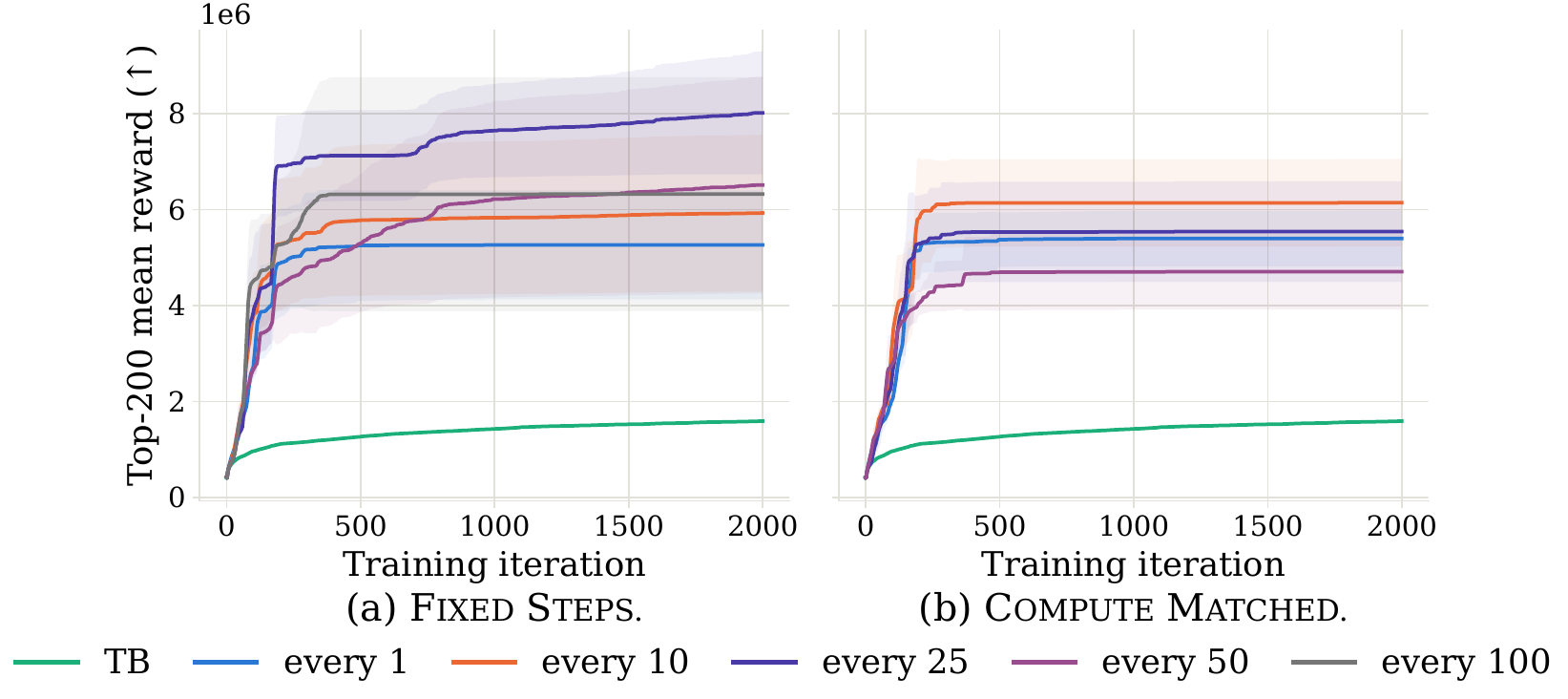}
    \caption{Effect of the classifier update cadence on the top-200 mean reward. Curves show the mean over three seeds, with bands at $\pm1$ standard deviation. Every cadence improves on TB and $c=25$ attains the highest mean in both families.}
    \label{fig:cadence-rewards}
\end{figure}

\begin{figure}[h]
    \centering
    \includegraphics[width=0.7\linewidth]{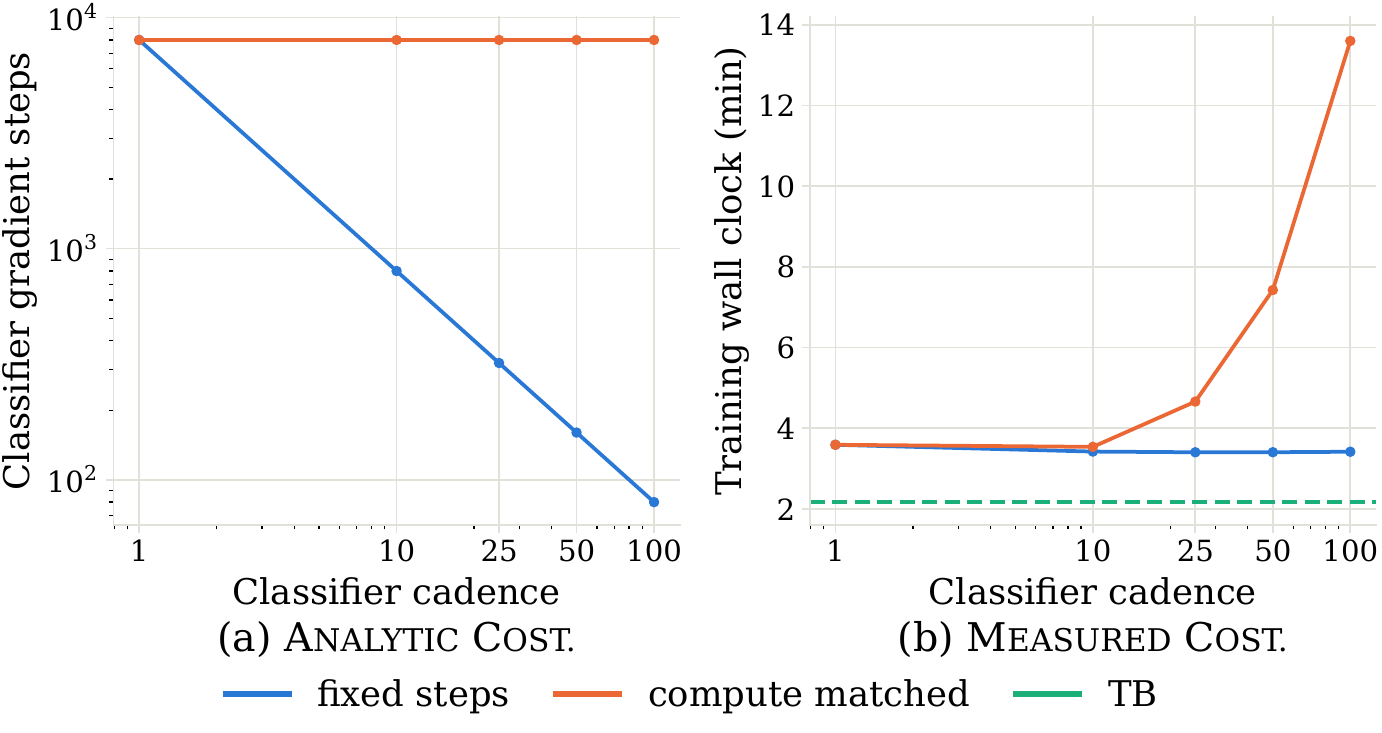}
    \caption{Cost of the classifier cadence. \textbf{(a)}~Classifier gradient steps: with fixed steps the number of updates falls in proportion to the cadence, from 8000 at $c=1$ to 80 at $c=100$, whereas the compute-matched family holds it at 8000. \textbf{(b)}~Measured training wall-clock time: sparser fixed-step cadences save almost no time despite performing fewer updates, while concentrating the same updates into rarer, longer bursts grows the cost. Trajectory Balance is shown for reference.}
    \label{fig:cadence-cost}
\end{figure}

\begin{table}[t]
  \centering\footnotesize
  \setlength{\tabcolsep}{5pt}
  \caption{\textbf{Fixed-steps family.} The classifier takes $s = 4$ gradient
  steps whenever it fires, so the total number of updates falls with the
  cadence $c$. Mean $\pm$ standard deviation over three seeds (42--44);
  $\Delta$ is the per-seed paired difference against the default $c = 1$.
  Reward is the mean of the top-200 unique terminal states. The TB reference
  is trained with $\epsilon = 0.1$, whereas HCG is on-policy
  ($\epsilon = 0$).}
  \label{tab:cadence-fixed-steps}
  \begin{tabular}{rrcccc}
    \toprule
     & Classifier & \multicolumn{2}{c}{Reward ($\times 10^{6}$, $\uparrow$)}
       & \multicolumn{2}{c}{Distance to target ($\downarrow$)} \\
    \cmidrule(lr){3-4} \cmidrule(lr){5-6}
    $c$ & updates & mean & $\Delta$ vs $c{=}1$ & Pooled TV & FCS \\
    \midrule
    TB  & 0    & $1.59 \pm 0.03$          & --               & $\mathbf{0.1677 \pm 0.0006}$ & $\mathbf{0.1657 \pm 0.0008}$ \\
    \midrule
    1   & 8000 & $5.26 \pm 1.14$          & --               & $0.1748 \pm 0.0044$ & $0.1717 \pm 0.0034$ \\
    10  & 800  & $5.93 \pm 1.64$          & $+0.67 \pm 0.54$ & $0.1731 \pm 0.0041$ & $0.1705 \pm 0.0041$ \\
    25  & 320  & $\mathbf{8.01 \pm 1.28}$ & $+2.75 \pm 1.61$ & $0.1715 \pm 0.0047$ & $0.1691 \pm 0.0045$ \\
    50  & 160  & $6.51 \pm 2.26$          & $+1.25 \pm 1.22$ & $0.1728 \pm 0.0046$ & $0.1702 \pm 0.0045$ \\
    100 & 80   & $6.32 \pm 2.44$          & $+1.06 \pm 1.36$ & $0.1726 \pm 0.0048$ & $0.1700 \pm 0.0047$ \\
    \bottomrule
  \end{tabular}
\end{table}

\begin{table}[t]
  \centering\footnotesize
  \setlength{\tabcolsep}{5pt}
  \caption{\textbf{Compute-matched family.} Steps per burst scale with the
  cadence, $s = 4c$, holding the total at 8000 updates for every arm, so this
  family varies staleness and burstiness at a fixed budget. Columns, seeds, and
  $\Delta$ as in Table~\ref{tab:cadence-fixed-steps}; $c = 1$ is the shared
  reference point of both families.}
  \label{tab:cadence-compute-matched}
  \begin{tabular}{rrcccc}
    \toprule
     & Classifier & \multicolumn{2}{c}{Reward ($\times 10^{6}$, $\uparrow$)}
       & \multicolumn{2}{c}{Distance to target ($\downarrow$)} \\
    \cmidrule(lr){3-4} \cmidrule(lr){5-6}
    $c$ & updates & mean & $\Delta$ vs $c{=}1$ & Pooled TV & FCS \\
    \midrule
    TB  & 0    & $1.59 \pm 0.03$          & --               & $\mathbf{0.1677 \pm 0.0006}$ & $\mathbf{0.1657 \pm 0.0008}$ \\
    \midrule
    1   & 8000 & $5.26 \pm 1.14$          & --               & $0.1748 \pm 0.0044$ & $0.1717 \pm 0.0034$ \\
    10  & 8000 & $5.40 \pm 0.58$          & $+0.13 \pm 1.35$ & $0.1778 \pm 0.0071$ & $0.1739 \pm 0.0058$ \\
    25  & 8000 & $\mathbf{6.14 \pm 0.91}$ & $+0.88 \pm 0.86$ & $0.1783 \pm 0.0051$ & $0.1737 \pm 0.0025$ \\
    50  & 8000 & $5.54 \pm 1.05$          & $+0.28 \pm 0.49$ & $0.1790 \pm 0.0061$ & $0.1735 \pm 0.0037$ \\
    100 & 8000 & $4.70 \pm 0.79$          & $-0.56 \pm 1.80$ & $0.1850 \pm 0.0239$ & $0.1735 \pm 0.0110$ \\
    \bottomrule
  \end{tabular}
\end{table}

\paragraph{Results} Table~\ref{tab:cadence-fixed-steps} shows no degradation anywhere in the fixed-steps family.
Against the default, every sparser arm improves FCS in all three seeds, and improves pooled TV and top-200 reward in two or three of them, so the paired deltas are favorable on all three metrics for all four values of $c$.
The largest effect is at $c = 25$, which improves reward by $2.75 \pm 1.61 \times 10^6$ with a negligible change in pooled TV, while performing $25\times$ fewer classifier updates.

The compute-matched family (Table~\ref{tab:cadence-compute-matched}) holds the total at $8000$ updates.
At $c = 100$, the two families hold staleness fixed and differ only in budget. The $80$ updates spread over training reach $6.32 \times 10^6$ reward and $0.1726$ pooled TV, while $8000$ updates delivered in bursts of $400$ reach $4.70 \times 10^6$ and $0.1850$.
A hundredfold larger classifier budget, at identical staleness, thus makes both metrics worse.
What harms the method is therefore not how old the classifier is, but how hard it is fitted to a replay window that does not refresh between bursts, which sharpens ownership onto stale evidence.

Figure~\ref{fig:cadence-rewards} reports the mean top-200 reward over training for all arms of the ablation.
In both the fixed-steps and the compute-matched families, the highest reward is obtained when the classifier is updated every 25 iterations.
Since the allowed and disallowed regions of each component change during training, the optimal update frequency may vary across environments. Rather than tuning this hyperparameter per environment, we use the default configuration $(c, s) = (1, 4)$ throughout, so the results reported in the main text may underestimate the performance achievable by HCG.

Figure~\ref{fig:cadence-cost} reports both the number of gradient steps and the wall-clock time spent in training.
By construction, the fixed-steps family falls from $8000$ gradient steps to $80$, while the compute-matched family is fixed at $8000$.
In wall-clock time the picture inverts, as the fixed-steps family remains flat at ${\approx}3.4$ minutes per arm, while the compute-matched family climbs to $13.6$ minutes.

\section{Wall-Clock-Matched Comparison}
\label{appendix:wall-clock}
The comparisons in the main text control for the number of training iterations, but methods differ in their cost per iteration. To check whether HCG's advantage holds under a fixed time budget, we plot the same runs against measured training time rather than iterations.

\begin{figure}[h]
    \centering
    \includegraphics[width=0.8\textwidth]{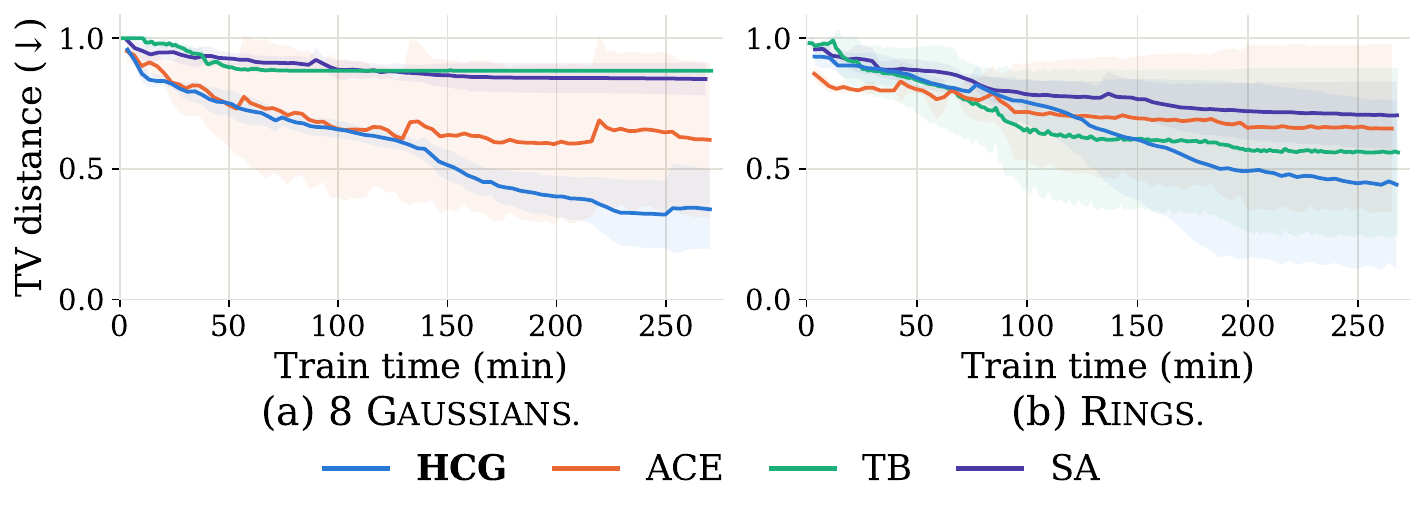}
    \caption{TV distance ($\downarrow$) to the target distribution against wall-clock
    training time on \textit{8 Gaussians} (left) and \textit{Rings}
    (right), truncated at TB's finishing time so that all methods are
    compared over the same time budget. Mean $\pm 1$ standard deviation over
    five seeds.}
    \label{fig:time-matched-coverage}
\end{figure}

\paragraph{Setup.} We use the DiffGrid setting of Section~\ref{ssec:diff_grid} ($81\times81$ grid, $T=80$) on the \textit{8 Gaussians} and \textit{Rings} landscapes, with batch size $1024$, exploration rate $\varepsilon=0.1$, five seeds per method, and HCG with 8 components trained in parallel. All runs were executed on the same CPU-only node with two 64-core CPUs (128 cores in total) and 512\,GB of RAM. Figure~\ref{fig:time-matched-coverage} plots the mean TV against cumulative training time, truncated when TB, the fastest method, completes its run, so that all curves share the same time budget.\looseness=-1

\paragraph{Results.} Under a matched time budget, HCG still reaches a lower TV distance than every baseline on both landscapes, indicating that its advantage is not an artifact of matching iterations. Although HCG performs more total computation per iteration, its components interact only through the shared classifier and can therefore be updated in parallel, so its wall-clock cost per iteration remains close to that of a single GFlowNet. ACE and SA, in contrast, are intrinsically sequential, as the main GFlowNet is trained on trajectories produced by the explorer, and the two models must be updated in turn, increasing their time per iteration. TB is the cheapest method per iteration and thus completes the most iterations within the budget, which explains why it is the closest baseline on \textit{Rings}. Even so, a single GFlowNet remains limited by the modes it discovers, and HCG outperforms it and the other baselines on both landscapes.

\section{HCG vs K-Independent GFNs}
\label{app:hcg-vs-independent-gfns}

\begin{figure}[t]
    \centering
    \includegraphics[width=0.5\linewidth]{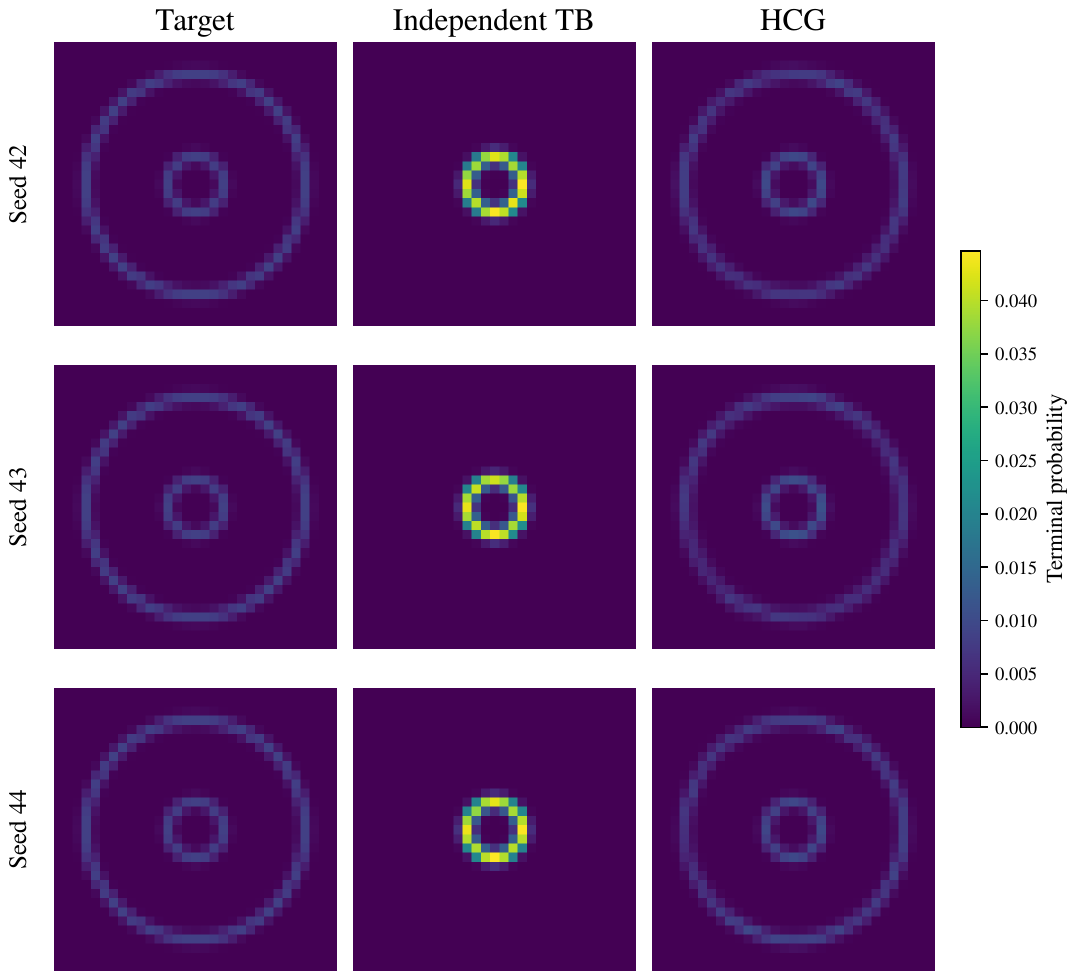}
    \caption{\textbf{HCG versus independent GFlowNet ensembles on Rings.}
    Exact terminal distributions on the $31\times31$ grid after 2,000 updates
    per GFlowNet. Rows correspond to seeds 42, 43, and 44, from top to bottom.
    Columns show the reward-proportional target, the mixture of three
    independently trained TB GFlowNets, and the three-component HCG mixture.
    Both learned mixtures use weights $Z_i/\sum_jZ_j$, and all nine panels
    share a linear probability color scale. Independent TB concentrates on
    the inner ring in all three runs, whereas HCG assigns substantial
    probability to both rings.}
    \label{fig:hcg-vs-independent-gfns}
\end{figure}

We investigate whether independently training multiple GFlowNets is sufficient
to improve mode coverage. We compare HCG with an ensemble of three independent
Trajectory Balance (TB) GFlowNets on the $31\times31$ Rings environment, using
three paired random seeds (42, 43, and 44). Both methods use three components
with identical GFlowNet architectures and paired initial parameters.
HCG additionally trains its ownership classifier; the comparison therefore
matches GFlowNet updates and trajectories, rather than total computational
cost. For both methods, we evaluate the final mixture with learned weights
$\omega_i=Z_i/\sum_j Z_j$. Terminal probabilities are computed exactly by
forward dynamic programming, without evaluation sampling.

Figure~\ref{fig:hcg-vs-independent-gfns} shows that, in every seed, all three
independent TB policies concentrate on the same easy inner ring, leaving
negligible probability on the distant ring. Their mixtures have total
variation distance $0.7996$ from the target in all three runs, whereas HCG
achieves $0.0816\pm0.0243$ (mean $\pm$ sample standard deviation). Defining
the outer region as $A=\{(x,y):\sqrt{x^2+y^2}>7.5\}$, its target probability
is $0.7996$. HCG assigns $0.7700\pm0.0146$ probability to this region,
compared with less than $1.1\times10^{-10}$ under each independent-TB mixture.

The failure of the independent ensemble cannot be repaired by reweighting
these trained components. Every independent policy satisfies
$P_i(A)<2.4\times10^{-10}$. Hence, for any convex mixture with weights
$\alpha_i\geq0$ and $\sum_i\alpha_i=1$,
\begin{equation}
    P_{\mathrm{mix}}(A)
    =\sum_i\alpha_iP_i(A)
    \leq\max_iP_i(A)
    <2.4\times10^{-10}.
\end{equation}
Thus, independent replication preserves the same exploration failure in
this finite-budget experiment, while HCG substantially improves coverage
of the distant ring and the fit to the reward-proportional target.

\end{document}